\documentclass[letterpaper]{article} 
\usepackage[preprint]{tmlr}

\usepackage{amsmath,amsfonts,bm}

\def\eqref#1{equation~\ref{#1}}

\def\1{\bm{1}}

\DeclareMathAlphabet{\mathsfit}{\encodingdefault}{\sfdefault}{m}{sl}
\SetMathAlphabet{\mathsfit}{bold}{\encodingdefault}{\sfdefault}{bx}{n}

\usepackage{algorithm}
\usepackage{algorithmic}

\usepackage{newfloat}
\usepackage{listings}
\usepackage{wrapfig}
\usepackage[table]{xcolor}
\usepackage{colortbl}
\usepackage{nicematrix}
\usepackage{booktabs}
\usepackage{subcaption}
\usepackage{multirow}
\usepackage{pdflscape}
\usepackage{longtable}
\usepackage{placeins}
\usepackage{capt-of} %

\floatstyle{ruled}
\newfloat{listing}{tb}{lst}{}
\floatname{listing}{Listing}

\usepackage{amsmath}      %
\usepackage{amssymb}      %

\usepackage{amsthm}

\newtheorem{theorem}{Theorem}%
\newtheorem{corollary}{Corollary}%

\theoremstyle{definition}

\theoremstyle{remark}
\newtheorem*{remark}{Remark}

\usepackage{graphicx}     %
\usepackage{subcaption}   %
\usepackage{enumitem}     %
\usepackage{mathtools}    %

\def\asterix{{\includegraphics[height=\fontcharht\font`M]{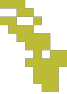}}}
\def\cauldron{{\includegraphics[height=\fontcharht\font`M]{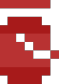}}}

\newcommand{\cheat}{\vspace{-2.4mm}}

\newcommand{\relcoords}{ORCA}
\newcommand{\relcoordslo}{Object-Relative Coordinates fixed to Anchor}
\newcommand{\relcoordslong}{\relcoordslo{} (\relcoords{})}
\newcommand{\Relcoordslong}{\MakeTitlecase[words=first]{\relcoordslong{}}}

\newcommand{\numgames}{13}

\newcommand{\game}[1]{\textit{#1}}
\newcommand{\framework}[1]{\textit{#1}}
\newcommand{\variant}[1]{\texttt{#1}}

\newcommand{\ppocnn}{PPO$^\text{\tiny CNN}$}
\newcommand{\ppomlp}{PPO$^\text{\tiny MLP}$}

\usepackage{hyperref}
\usepackage{url}
\usepackage[capitalize, noabbrev]{cleveref}
\Crefname{appsec}{Appendix}{Appendices}

\hypersetup{
    colorlinks=true,
    allcolors=black
}

\definecolor{shade}{gray}{0.94}

\title{Reframing Object-Centric Reinforcement Learning:\\
From Absolute to Relative Representations}

\author{%
\name Cedric Derstroff\thanks{These authors contributed equally.}
\email cedric.derstroff@tu-darmstadt.de \\
\addr The Hessian Center for Artificial Intelligence (hessian.AI) \\
Technische Universität Darmstadt \\\cheat{}
\AND
\name Jannis Blüml\footnotemark[1] 
\email jannis.blueml@tu-darmstadt.de \\
\addr The Hessian Center for Artificial Intelligence (hessian.AI)\\
Technische Universität Darmstadt\\\cheat{}
\AND
\name Janosch Moos 
\email janosch.moos@tu-darmstadt.de \\
\addr Technische Universität Darmstadt \\\cheat{}
\AND
\name Michael Girstl 
\email michael.girstl@tu-darmstadt.de \\
\addr The Hessian Center for Artificial Intelligence (hessian.AI)\\
Technische Universität Darmstadt\\\cheat{}
\AND
\name Zihan Ye 
\email zihan.ye@tu-darmstadt.de \\
\addr The Hessian Center for Artificial Intelligence (hessian.AI)\\
Technische Universität Darmstadt\\\cheat{}
\AND
\name Debora Clever
\email debora.clever@tu-darmstadt.de \\
\addr Technische Universität Darmstadt\\\cheat{}
\AND
\name Kristian Kersting 
\email kristian.kersting@tu-darmstadt.de \\
\addr The Hessian Center for Artificial Intelligence (hessian.AI) \\
Centre for Cognitive Science \\
German Research Center for Artificial Intelligence (DFKI) \\
Technische Universität Darmstadt \\
}

\def\month{MM}  %
\def\year{YYYY} %
\def\openreview{\url{https://openreview.net/forum?id=XXXX}} %

\begin{document}

\maketitle  %

\vspace{-3mm}

\begin{abstract}
Object-centric reinforcement learning agents typically receive each object’s position in a fixed global frame, even though their decisions usually depend on where objects are relative to one another. These absolute views often fail to capture the local context that is most relevant to the agent’s behavior. As a result, an agent must relearn the same response at every location where a configuration can occur, and its performance collapses when the scene is merely shifted.
In this work, we propose a shift toward Object-Relative Coordinates fixed
to Anchor (ORCA), where entities are described in a relative frame of reference. To operationalize this idea, we introduce Object-Relative inductive-Bias Transformers (ORBiT), a transformer-based architecture for processing anchor-relative inputs in reinforcement learning.
Our results suggest these relative representations improve performance for both MLP and transformer agents. Furthermore, we demonstrate that ORBiT outperforms the pixel-based PPO baseline, achieving an aggregate HNS IQM greater than 1.0 over 13 Atari games. ORBiT learns the game of Foosball from scratch, dominating all baselines. In our controlled Tag environment, we further show that ORCA extends to rotations, that ORBiT generalizes to unseen numbers of distractors, and that these gains require a decision-relevant anchor.
\end{abstract}

\section{Introduction}

Ask a person where their keys are, and they will not answer in coordinates; they will say \emph{next to the bowl}, \emph{on the left side of the desk}, or \emph{behind you}.
Human spatial reasoning often uses spatial anchors and relative frames of reference~\citep{levinson2003space}. Scenes are parsed into bounded objects with persistent identity, and the position of each object is encoded in a frame tied to the body or to nearby landmarks rather than in a single fixed global system~\citep{klatzky1998allocentric,spelke1990principles, scholl2001objects, burgess2006spatial}.
Object-centric reinforcement learning (OCRL) has internalized the first of these two ideas: rather than feeding pixels to a policy, OCRL extracts a set of objects with type, position, and velocity, and lets the agent reason over the resulting structured representation~\citep{diuk2008object,Baker2020Emergent,delfosse2024ocatari,SCOBOTS,ye2025learning,ye2026boosting}.
The second idea, using anchors and relative frames of reference, has been independently adopted in multi-agent reinforcement learning, robotics, biology, and linguistics, where observations relative to the agent's position are more commonly used~\citep{gupta2017cognitive,wang2020egocentric,shang2021agent,levinson2003space}, yet it remains largely absent from OCRL.
The absolute-coordinate convention appears to have been inherited from early object-vector pipelines and rarely revisited.

\begin{figure}[t]
    \centering
    \includegraphics[width=\linewidth]{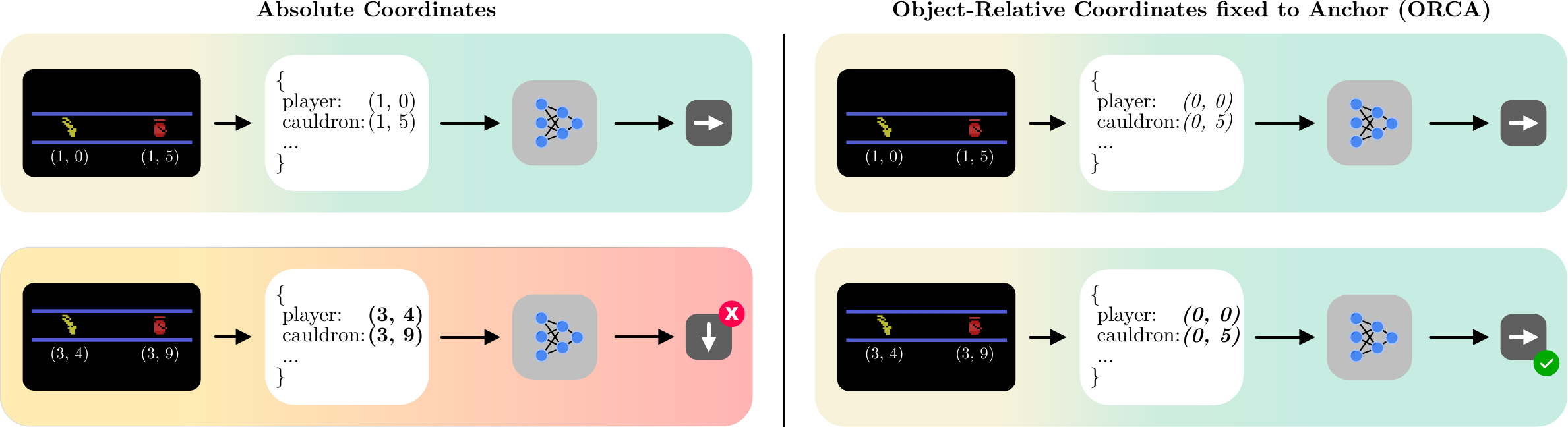}
    \caption{\textbf{With \relcoords{} inputs, the agent only needs to learn strategies for similar situations once, enabling direct generalization.} When the player (Asterix\;\asterix{}) is in the same row as the target collectible (cauldron\;\cauldron{}), e.g., the cauldron $(1, 5)$ is right of Asterix $(1,0)$, the absolute coordinates of the objects should not influence the agent's decision. The agent should behave the same even if the cauldron is at $(3,9)$, which is also on the right of Asterix $(3,4)$, but on a different platform. \Relcoordslong{} performs this transformation in object space, simplifying the agent's decision.}
    \label{fig:teaser}
\end{figure}

From classical object-vector pipelines using multi-layer perceptrons (MLPs) to recent transformer-based variants, OCRL serves object features to the policy in the \emph{absolute frame of reference or global coordinates} \citep{anand2019unsupervised,zadaianchuk2020selfsupervisedvisualreinforcementlearning, mambelli2022compositionalmultiobjectreinforcementlearning, haramatientity,delfosse2024ocatari}.
This leaves inter-object relations implicit: For example, a cauldron 5 pixels to the right of the player in \game{Asterix} maps to a different input at every screen location, so the agent must relearn the same response from scratch at each position (see \cref{fig:teaser}).
The result is fragility under spatial perturbations, frame translation, and platform shifts, even when the relative scene is unchanged.

We close this gap with \textbf{ORBiT} (Object-Relative inductive-Bias Transformer): an \emph{\relcoordslong{}} input that encodes each object's position and velocity in the relative frame of a designated anchor object (typically the player or important entities like the ball in \game{Foosball}), fed into a transformer encoder that is order-invariant and count-flexible by construction.
When trained on the \framework{Arcade Learning Environment (ALE)}~\citep{bellemare13arcade,machado18arcade} and tested across the \framework{HackAtari} benchmark~\citep{hackatari}, switching the inputs from absolute coordinates to \relcoords{} raises the interquartile mean (IQM) performance from $0.54$ to $1.10$ in the human-normalized score (HNS, see \cref{app:metrics}), moving it from below a pixel baseline without access to objects ($0.69$) to approximately human level.
Testing robustness, we see that ORBiT loses only $4\%$ of its score on the \framework{HackAtari} test suite, whereas absolute coordinates yield a $30\%$ drop.
In \game{Foosball} --- a challenging 8-DoF (degrees of freedom) robotics environment that requires robust policies in the presence of changing opponent behavior \citep{foosball} --- ORBiT concedes only 73 goals out of 50,000 against its absolute-coordinate baseline OCT. In \game{Tag}, an environment where the player has to catch a target without being distracted by other moving objects,
ORBiT reaches 22.4 catches within 500 steps, whereas OCT is barely able to catch the target (1.8).

\paragraph{Contributions.}
\begin{itemize}[leftmargin=*, topsep=0pt, itemsep=2pt, parsep=0pt]
    \item We formalize \textbf{\relcoordslong{}} representations, in which object positions and velocities are expressed relative to a task-relevant object.
    \item We present \textbf{ORBiT}, a transformer encoder over \relcoords{}-encoded object sets that is order-invariant and count-flexible by construction.
    \item We show that ORBiT improves end performance and robustness to spatial perturbations (translation and platform shifts) compared to absolute-coordinate variants, which substantially degrade under the same perturbations.
    \item We introduce \game{Tag}, a controlled environment with oriented, moving objects in which object count, the anchor, and landmarks can be varied independently. This lets us evaluate ORCA with an oriented anchor, where it applies the full rigid-body transform.
    \item We demonstrate on \game{Foosball} that ORBiT transfers to continuous-control tasks and outperforms both OCT (the same transformer with absolute coordinates) and classical MLP models (\ppomlp{}).
\end{itemize}

In the remainder of the paper, we formalize \relcoords{} representations, describe the ORBiT architecture, and evaluate its effectiveness across multiple environments and generalization settings.

\section{From Absolute to Relative Representations}  
\label{sec:aor}

\begin{wrapfigure}[22]{r}{.52\linewidth}
    \vspace{-9pt}
    \centering
    \begin{subfigure}[t]{0.4\linewidth}
        \centering
        \includegraphics[width=\linewidth]{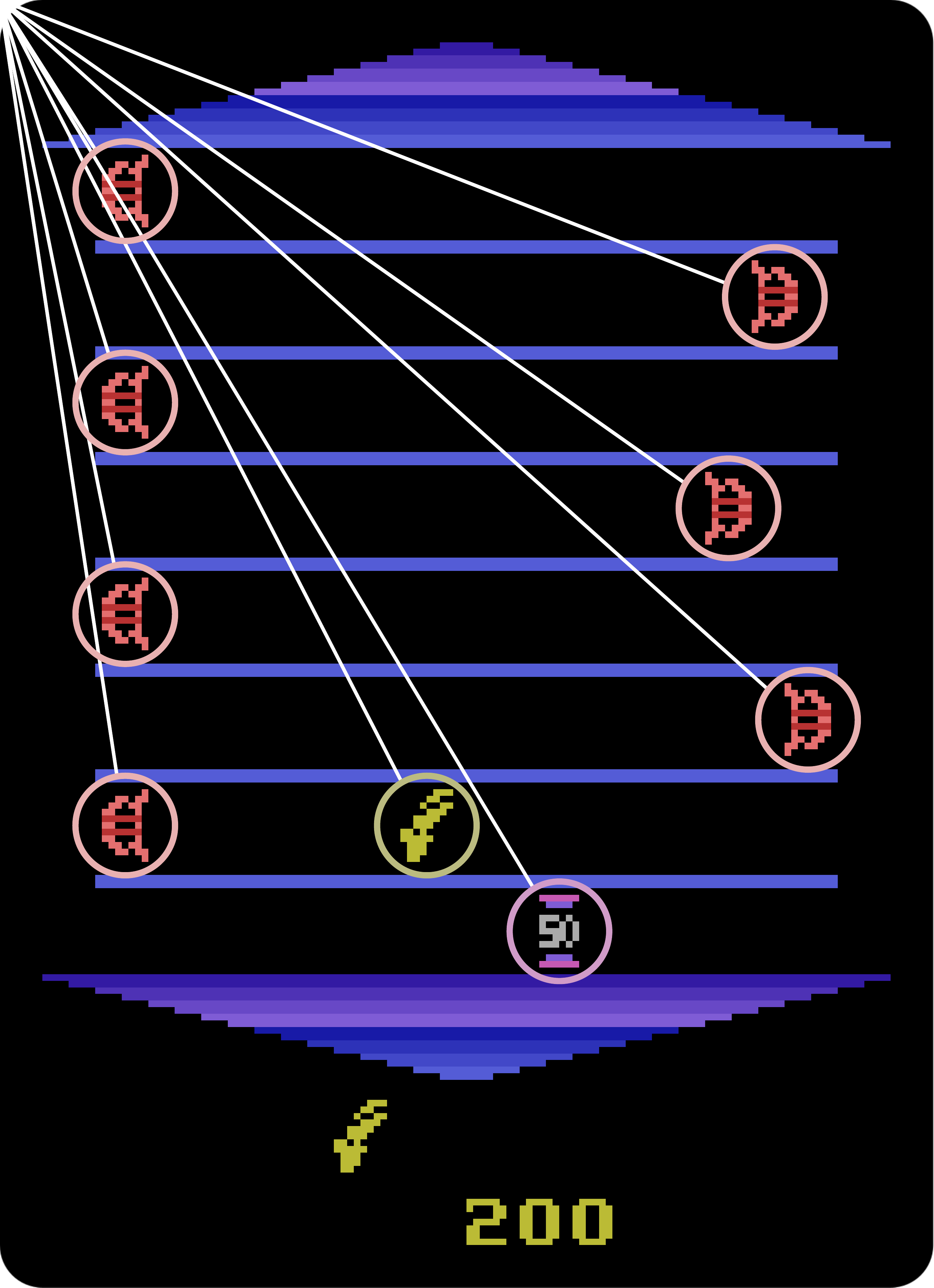}
        \caption{Absolute}
    \end{subfigure}%
    ~\hspace{2.5mm}
    \begin{subfigure}[t]{0.5\linewidth}
        \centering
        \includegraphics[width=0.8\linewidth]{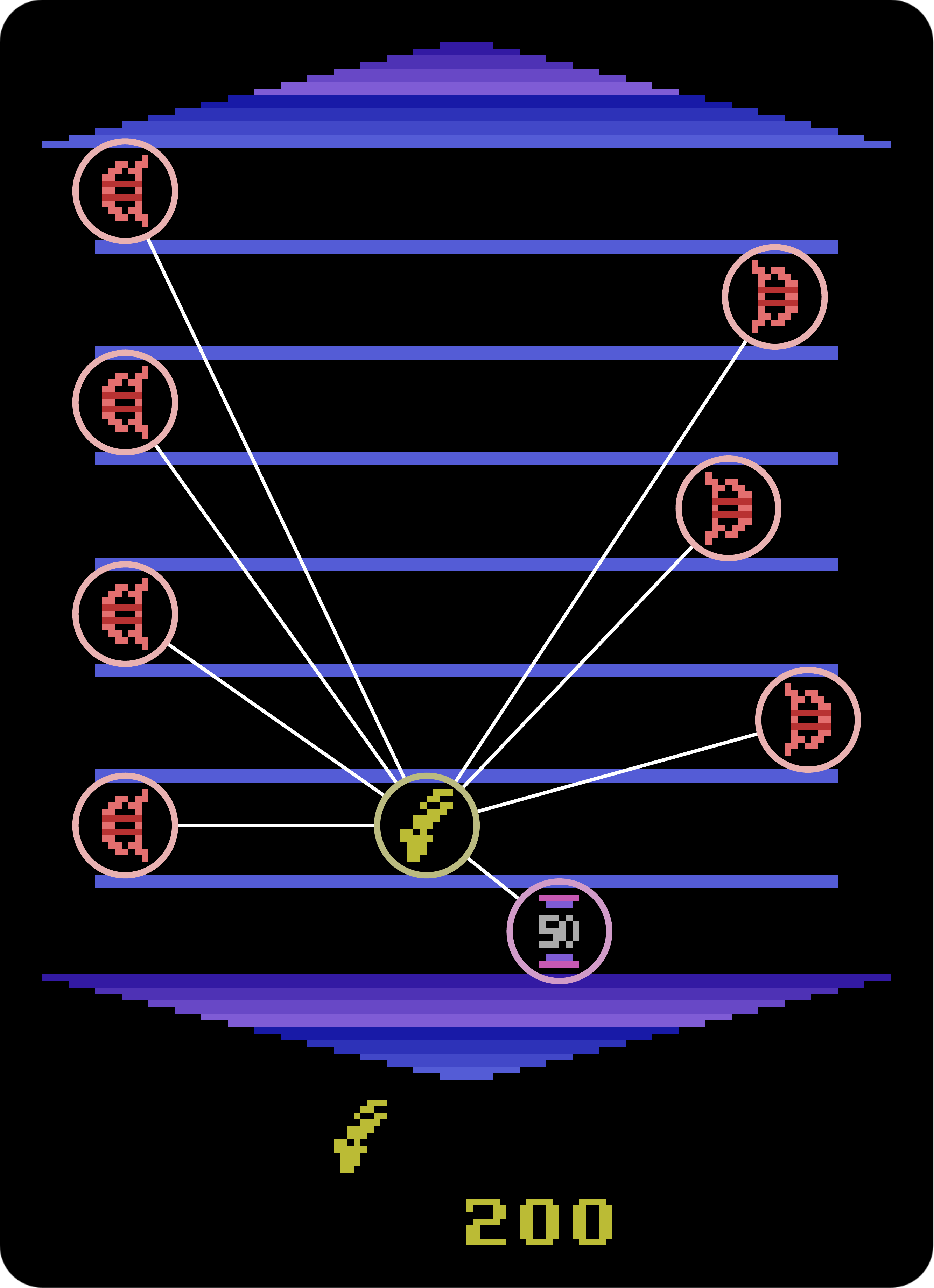}
        \caption{\centering Relative (\relcoords{})}
    \end{subfigure}
    \caption{\textbf{Comparison of absolute and relative representations in \game{Asterix}.} \textit{Left:} Absolute representations encode object positions with respect to a fixed origin in the environment (e.g., top-left corner). \textit{Right:} Relative representations encode object positions with respect to the anchor object (e.g., the agent’s location).}
    \label{fig:coordinate_systems}
\end{wrapfigure}

We model the environment with object-centric representations, where  the agent receives at each step $t$ a set of $N_t$ objects $\mathcal{O}_t = \{o^1_t, \dots, o^{N_t}_t\}$ from an upstream detector, such as SAM3~\citep{SAM3}, OCAtari~\citep{delfosse2024ocatari}, or similar.
Each object $o^i_t$ is a feature vector of \emph{intrinsic} $\mathbf{c}^i_t$ and \emph{extrinsic} $\mathbf{e}^i_t$ attributes.
Intrinsic attributes may cover the object type, color, and size, while the extrinsic attributes feature the absolute position $\mathbf{p}^i_t$, velocity $\mathbf{v}^i_t$, acceleration $\mathbf{a}^i_t$, and the heading angle $\varphi^i_t$ of the object in the absolute environment frame.
In our experiments, we limit $\mathbf{c}^i_t$ to a one-hot encoding of the object type and
$\mathbf{e}^i_t$ to $\mathbf{p}^i_t$ and $\mathbf{v}^i_t$, where we use $\mathbf{v}^i_t = \mathbf{p}^i_t - \mathbf{p}^i_{t-1}$ for Atari
(for \game{Foosball}, see \cref{app:foosball}).
OCRL
typically passes $\mathcal{O}_t$ directly to the policy as global values within an absolute frame of reference.

We instead express each object's position and velocity relative to a designated \emph{anchor} object $\psi_t$.
The anchor should represent the most important object in the game. In \game{Atari}, we typically use the player object --- e.g., the paddle in \game{Pong} or Asterix in \game{Asterix} --- as the relative anchor, whereas in \game{Foosball}, we use the ball.

Concretely, rather than encoding ``the cauldron at pixel $(3, 9)$'' in Asterix, we encode ``the cauldron 5 pixels right and 0 pixels above the player'' (see \cref{fig:coordinate_systems}).
Formally, let $\psi_t$ have position $\mathbf{p}^\psi_t$, velocity $\mathbf{v}^\psi_t$, and orientation $\mathbf{Q}_t \in \mathit{SO}(d)$; in $d=2$ dimensions, $\mathbf{Q}_t = \mathbf{R}(\varphi_t)$ for a heading angle $\varphi_t$, where $\mathbf{R}(\varphi_t)$ denotes the 2d rotation matrix parameterized by $\varphi_t$.
The anchor-relative observation
\begin{equation}
    \tilde{\mathcal{O}}_t = \{\tilde{o}^i_{\psi, t}\}_{i=1}^{N_t}
\end{equation}
is defined by the transformation
\begin{equation}
    \tilde{o}^i_{\psi, t} = T(\psi_t,\, o^i_t)
    = \big[\,\boldsymbol{\phi}_{\mathrm{int}}(o^i_t),\;\boldsymbol{\phi}_{\mathrm{rel}}(\psi_t,\, o^i_t)\,\big],
    \label{eq:aor}
\end{equation}
where $\boldsymbol{\phi}_{\mathrm{int}}$ returns $\mathbf{c}^i_t$ unchanged and $\boldsymbol{\phi}_{\mathrm{rel}}$ maps positions and velocities into the anchor's frame. 
With $\mathbf{R}_t := \mathbf{Q}_t^\top = \mathbf{R}(-\varphi_t)$ in $d = 2$,
\begin{equation}
    \boldsymbol{\phi}_{\mathrm{rel}}(\psi_t,\, o^i_t)
    = \big[\,
        \underbrace{\mathbf{R}_t \cdot (\mathbf{p}^i_t - \mathbf{p}^\psi_t}_{\Delta\mathbf{p}^i_t}),\;
        \mathbf{R}_t \cdot (\mathbf{v}^i_t - \mathbf{v}^\psi_t)
      \,\big].
    \label{eq:phi-rel}
\end{equation}
For \framework{Atari} and \framework{Foosball}, where the player and ball have no meaningful orientation, we assume $\mathbf{R}_t = \mathbf{I}$ and \cref{eq:phi-rel} reduces to subtracting the anchor's position and velocity from each object. Every Atari result we report is, therefore, evidence for translation invariance only; see \cref{rem:translation-only}. For objects with a heading $\varphi^i_t$, the rotation matrix $\mathbf{R}(-\varphi_t)$ encodes the frame from the perspective of the anchor object itself, as we can see in \game{Tag}. \game{Tag} also uses an extended version of  \cref{eq:phi-rel} (see \cref{app:tag}). The anchor object is self-encoded as $T(\psi_t, \psi_t)$, giving $\Delta\mathbf{p}^\psi_t = \mathbf{0}$ and zero relative velocity; this keeps the anchor object as a regular member of the object set (see \cref{fig:coordinate_systems}). \relcoords{} however discards the anchor's absolute pose and velocity. As we will see in the results, this does not pose a problem in most environments, since these environments contain static objects that can serve as landmarks. 
Furthermore, our method assumes that the anchor object exists at all times and can always be localized. RQ5 measures the extent to which the anchor choice matters.

\paragraph{Invariance.}
A practical consequence is that the policy does not need to relearn responses across spatial positions: two scenes that differ only by a global shift or rotation produce identical \relcoords{} observations.
Formally, $\tilde{\mathcal{O}}_t$ is invariant to any rigid transformation of the world frame applied jointly to all objects,  so a policy acting on $\tilde{O}_t$ returns the same action distribution for all scenes related by such a transformation, with no auxiliary losses.
A formal proof over $\mathit{SE}(d)$ and the orientation-free specialization used in our Atari experiments are in \cref{app:invariance_proof}. Where the anchor object carries a meaningful orientation, and actions are expressed in the world frame, the policy's output must be mapped back through $\mathbf{Q}_t$, making the composite policy equivariant rather than invariant.

\paragraph{Example.}

In \game{Asterix}, the same configuration --- a lyre or cauldron on the left of Asterix --- maps to a different absolute input on every platform; however, under \relcoords{}, it is always the same relative vector, so the policy only has to learn the response once (see \cref{fig:teaser,fig:coordinate_systems}). Similarly, this holds for the cars relative to the player (chicken) in \game{Freeway}.

\relcoords{} adds no learnable parameters and wraps any object-vector environment without modifying the downstream policy class. As such, \relcoords{} can be used with any object-centric approach with an identifiable potential anchor object. This includes methods such as an object-centric PPO (PPObj) and an absolute-coordinate transformer (OCT). 

\paragraph{Scene Graphs and GNNs.}Representing the scene as a graph over all objects is a genuine alternative, with a long history in relational RL, object-oriented MDPs, and graph networks~\citep{battaglia2018relational}; see \cref{sec:related_work}. 
Given displacement edge features $p^i_t - p^j_t$, such a model obtains the same translation invariance that \relcoords{} does, so the choice between the two is not about whether the invariance is available, but about what it costs to obtain.
A graph-based policy constructs edge features that scale quadratically with $N_t$, evaluates a learned edge function on each, and introduces design decisions, such as edge parameterization or aggregation schemes, that must be specified and tuned before the invariance can be of any use.
\relcoords{} changes the input representation while keeping the rest untouched: the policy, training algorithm, and reward stay as they are.
The relational content is preserved rather than traded away, as for any two objects $i$ and $j$,
\begin{equation}
  \mathbf{R}_t \cdot \bigl(\mathbf{p}^i_t - \mathbf{p}^\psi_t\bigr) - \mathbf{R}_t \cdot \bigl(\mathbf{p}^j_t - \mathbf{p}^\psi_t\bigr) \;=\; \mathbf{R}_t \cdot \bigl(\mathbf{p}^i_t - \mathbf{p}^j_t\bigr),
  \label{eq:pairwise}
\end{equation}
so every pairwise displacement remains recoverable as the difference between two anchor-relative tokens, and self-attention can form whichever of them the task requires.
This keeps the relational information that an explicit graph would provide while leaving the pipeline as simple as possible.
The cost is the anchor itself: a graph over all objects needs no designated object, whereas \relcoords{} requires one that exists and can be identified at every step.
We do not evaluate an explicit scene-graph variant and leave that comparison to future work.

\subsection{Anchor Choice}
Forgoing a full graph evaluation requires selecting a single anchor object. This choice mirrors real-world human spatial cognition, which is fundamentally biased toward an egocentric frame of reference \citep{wang2002human}. When playing video games, humans naturally adopt a player-centric perspective, which motivated our design of the coordinate system. We therefore use the player object as the anchor wherever the game provides one. 
More generally, we select as anchor the object whose relations to the rest of the scene are most decision-relevant, i.e., the object that the agent's actions most directly act upon or through.

However, a player-centric anchor is not the only possible choice. In \game{Foosball}, the ideal spatial reference shifts. Unlike the \framework{Atari} environments, \game{Foosball} features eleven distinct player objects under the agent's control. Rather than anchoring to any single player, the most critical element in the game is the ball. Consequently, we use the ball as our anchor for the game of \game{Foosball}.
We further use \game{Tag} to systematically investigate the influence of different anchor choices in ORBiT (see \cref{sec:experiments}). This illustrates that the anchor is best understood as a task-dependent design parameter rather than a fixed convention; automatically identifying the anchor, or combining several anchors, is a promising direction for future work.

\subsection{Object-Relative inductive-Bias Transformers (ORBiT)}
\label{sec:transformer}
\begin{figure}
    \centering
    \includegraphics[width=0.95\linewidth]{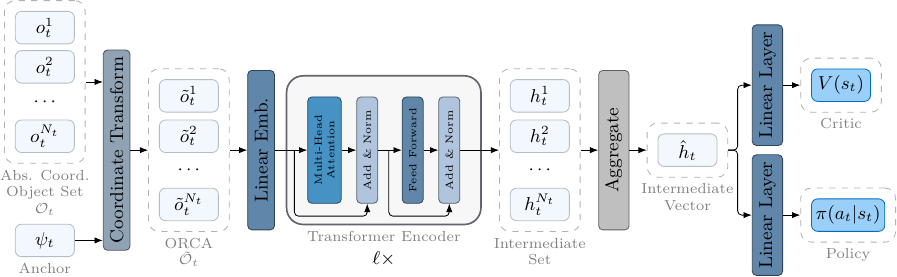}
    \caption{\textbf{ORBiT architecture: a versatile transformer with \relcoords{} inputs}. The absolute coordinate object set $\mathcal{O}_t$ is transformed into the anchor-relative set $\tilde{\mathcal{O}}_t$.
    These \relcoords{} inputs are embedded into the working dimension of the transformer. The transformer's output sequence is aggregated (mean-pooled) into a single vector $\hat{h}_t$ before being passed through the linear critic and policy heads. OCT uses $\mathcal{O}_t$ directly within the otherwise identical architecture.}
    \label{fig:orbit}
\end{figure}

To process object-centric observations flexibly and expressively, we build on the transformer architecture~\citep{vaswani2017attention}, which has proven effective at modeling interactions among sets of entities. The transformer’s raw self-attention mechanism enables reasoning over unordered inputs \citep{vaswani2017attention}, making it a natural fit for RL agents operating on object-based sets extracted from structured environments. ORBiT builds such a transformer over the \relcoords{}-encoded object set $\tilde{\mathcal{O}}_t$ from \cref{sec:aor}.

Each token $\tilde{o}^i_{\psi, t} \in \tilde{\mathcal{O}}_t$ concatenates intrinsic attributes $\mathbf{c}^i_t$ and relative spatial features $\boldsymbol{\phi}_{\mathrm{rel}}(\psi_t, o^i_t)$.
Object types are one-hot encoded, and the type vocabulary is fixed at initialization.
A shared linear layer projects each token to a $d$-dimensional embedding, giving $N_t$ embeddings as input to the transformer encoder layers. After $\ell$ encoder layers, the resulting set is aggregated into a single vector before being passed into the linear value and policy head (see \cref{fig:orbit}). More details in \cref{app:transformer}.

Our baseline OCT uses the same architecture and hyperparameters as ORBiT but takes absolute coordinates as input rather than \relcoords{}.
\section{Empirical Evaluation}
\label{sec:experiments}

We evaluate whether \relcoords{} representations and the ORBiT architecture (\cref{sec:transformer}) improve reinforcement learning performance and generalization. Our empirical analysis is organized around the following research questions:

\begin{enumerate}[label=\textbf{RQ\arabic*.}, leftmargin=*, itemsep=2pt]
    \item Does \relcoords{} outperform absolute-coordinate object inputs in OCRL?
    \item Is \relcoords{} architecture-agnostic?
    \item Are \relcoords{}-based agents more robust to test-time spatial perturbations (translation, frame shifts, platform repositioning) than absolute-coordinate agents?
    \item Does ORBiT transfer to continuous-control tasks beyond Atari?
    \item How much do ORCA's gains depend on the choice of anchor?
\end{enumerate}

\paragraph{Environments and Setup.}

We train all agents on \numgames{} \framework{Atari} games and evaluate robustness zero-shot using the \framework{HackAtari} perturbation suite~\citep{hackatari}, as well as random translations as proposed by \citet{rad}; perturbations are never seen during training.
Object states are extracted with \framework{OCAtari}'s RAM extraction method~\citep{delfosse2024ocatari} at each step.
We use IQM for aggregated comparisons to reduce sensitivity to outliers~\citep{agarwal2021deep} and report it together with the 95\% confidence interval (CI) (see \cref{app:metrics}). Retained performance is described as 
\begin{equation}
\varrho =  \frac{r_{var} - r_{min}}{r_{org} - r_{min}} 
\end{equation}
and measures the robustness to close variations of the game, with $r_{org}$ being the return after an episode in the original game, $r_{var}$ in the respective variant, and $r_{min}$ the minimal return possible. For our standard \framework{Atari} experiments, we instead use the human-normalized score (HNS; see \cref{app:metrics} and \citet{BadiaPKSVGB20}) to enable aggregation across games.
Five agents are compared throughout:
\textbf{\ppocnn{}} (pixel-based CNN, no object extraction),
\textbf{PPObj} (MLP over absolute-coordinate object vectors, no transformer),
\textbf{PPObj-\relcoords{}} (same architecture as PPObj, but with \relcoords{} feature representation),
\textbf{OCT} (transformer over absolute-coordinate objects, same architecture as ORBiT),
and \textbf{ORBiT} (transformer over \relcoords{}-encoded objects; ours).
PPObj and OCT together independently isolate the contributions of the transformer and the \relcoords{} representation.
All agents are trained on 40 million frames or 10 million time steps across 5 seeds; the implementation follows \framework{CleanRL}~\citep{huang2022cleanrl} for PPO-like training and is extended for OCT and ORBiT.
Each run takes approximately 4 hours and 15 minutes on average on a single NVIDIA V100 GPU. These experiments are the basis for addressing research questions RQ1--RQ3.

For RQ4, ORBiT is evaluated against OCT and \ppomlp{} (absolute coordinates and joint values, classical MLP) on a \game{Foosball} environment derived from \citet{foosball}. 
Each agent is trained for roughly 6.6 billion time steps on 4,096 parallel environments. 

For  RQ5, we additionally use \game{Tag} (see \cref{app:tag}), a controlled 2D environment. In a bounded arena, the agent must catch the orange target, while purple targets act as distractors. Unlike \game{Atari} and \game{Foosball}, objects in \game{Tag} carry a heading, so \relcoords{} applies the full rigid-body transform (translation and rotation) rather than translation alone. Tag also lets us vary the number of objects and the choice of anchor. Starting from a base configuration (5 distractors, the player as anchor), we vary one factor at a time. We train OCT, ORBiT, PPObj, and PPObj-ORCA for 3 million time steps across 5 seeds.

Full hyperparameter details, as well as a complete list of the used Atari games, are in \crefrange{app:hyperparams}{app:foosball}.

\subsection{Evaluating the Performance}
\paragraph{RQ1 --- Relative coordinate systems heavily improve object-centric methods.}

\Cref{fig:aggregate} shows that
player-relative \relcoords{} inputs outperform global coordinates.
The transformer agent gains $\Delta{+}0.56$ HNS (from $0.54$ to $1.10$) with 95 \% CI [$0.49$, $0.63$], reaching human level.

\begin{wrapfigure}[26]{r}{0.53\linewidth}
    \vspace{-0.33cm}
    \includegraphics[width=\linewidth]{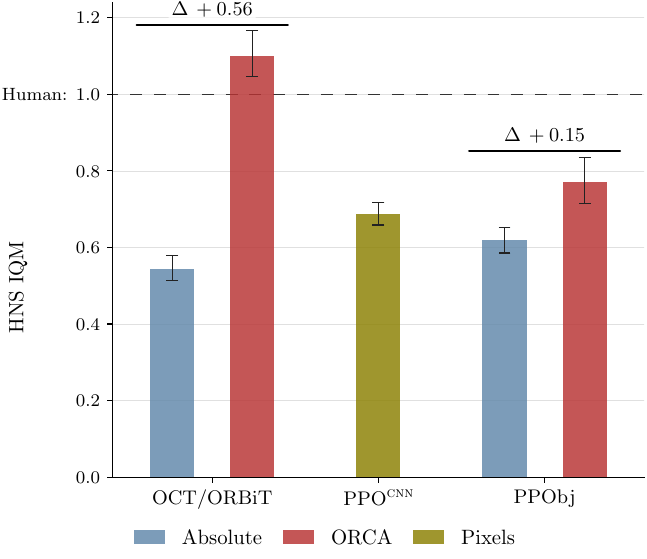} 
    \caption{\textbf{\relcoords{} improves both architectures in Atari.} The gain in HNS IQM is larger for the transformer (OCT/ORBiT, $\Delta{+}0.56$) than for the MLP (PPObj, $\Delta{+}0.15$).
    Both absolute-coordinate models fall below the pixel baseline (\ppocnn{}); only ORBiT approaches human level (dotted). Aggregated end performance (HNS IQM) with 95\% confidence intervals. Higher is better. Full per-game numbers in \cref{app:ext}.}
    \label{fig:aggregate}
\end{wrapfigure}

While the absolute coordinate transformer OCT falls behind the pixel-input \ppocnn{} (IQM: $0.69$, 95\% CI: [$0.66$, $0.72$]), despite receiving clean object states rather than raw pixels, ORBiT gains enough performance through \relcoords{} to clear this baseline. This means that without a reference anchor, the transformer fails to capture position-dependent correlations that do not hold across the game's spatial extent.
The confidence interval for ORBiT (OCT+\relcoords{}) spans $1.04$--$1.16$ IQM and still shows a substantial improvement over pure OCT. The same holds in the \game{Tag} environment, where ORBiT reaches 22.4 catches, compared with 1.8 for OCT (see \cref{fig:tag}).

\paragraph{RQ2 — \relcoords{} benefits both MLP and transformer agents.}
To separate these contributions, we isolate two effects: first, the improvement using \relcoords{}; second, we measure the difference between OCT and PPObj, isolating the architecture effect (both use global inputs). Here, the transformer alone does not help.

Besides the benefits of using \relcoords{} within the transformer agent, we can also observe an improvement in performance of $\Delta{+}0.15$ (from $0.62$ to $0.77$) with 95\% CI [$0.09$, $0.22$] within the MLP agent PPObj to bring it in line with the \ppocnn{} baseline (IQM: $0.69$, 95\% CI: [$0.66$, $0.72$]).

However, neither improvement alone clears the human baseline; PPObj-\relcoords{} sits at $0.77$ (95\% CI [$0.71$, $0.84$]) and OCT at $0.54$ (95\% CI [$0.51$, $0.58$]), but the combination does, reaching $1.10$ (see \cref{fig:aggregate}) with a 95\% CI [$1.04$, $1.16$].

The most plausible explanation is that the transformer can attend to each object's anchor-relative features independently and compose them, whereas an MLP concatenates all objects into a fixed-size vector and must re-learn any relational structure the \relcoords{} encoding makes explicit. Overall, we can say that \relcoords{} improved performance in both cases.
The substantial gap between $\Delta{+}0.56$ and $\Delta{+}0.15$ is primarily driven by a few games, e.g., \game{Asterix} (see \cref{tab:ext_atari}), suggesting that the transformer may disproportionately amplify \relcoords{} in specific games while maintaining a similar impact across most others.

The benefits of \relcoords{} are consistent across both architectures we test. 
This implies that relative coordinates should be used instead of absolute coordinates whenever the task structure allows.

\subsection{Robustness and Generalization}
\begin{figure}[t]
    \centering
    \includegraphics[width=\linewidth]{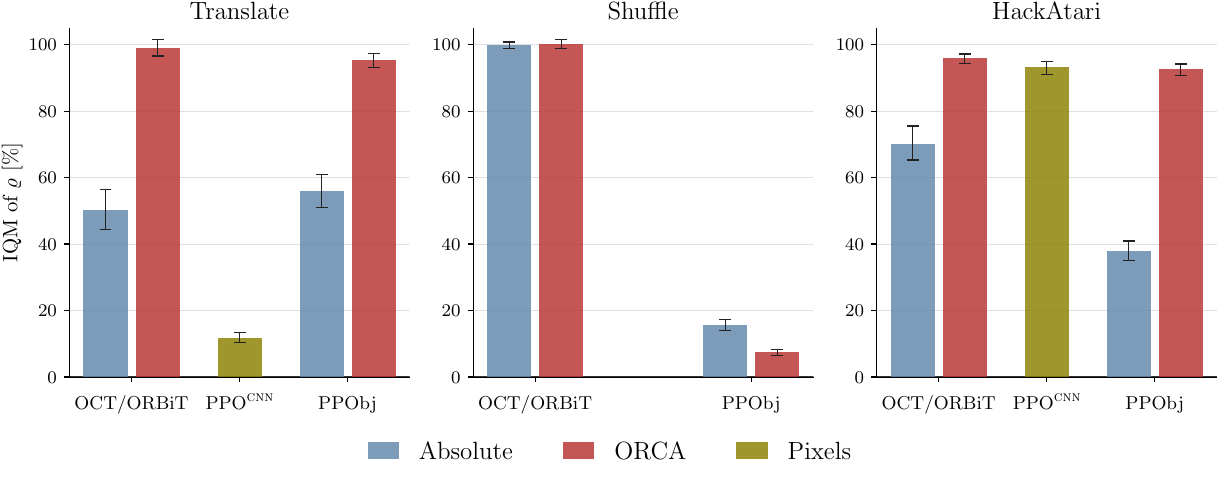}
    \caption{\textbf{ORBiT enhances robustness and retains the highest performance in most settings.} The transformer alone explains the full performance in OCT and ORBiT for \variant{shuffle}, which is not applicable for the CNN-based \ppocnn{} that operates on image data. In \framework{HackAtari}, ORBiT retains its performance similar to \ppocnn{}, and \variant{translate} tests \cref{cor:policy-equivariance}.
    The IQM of the \textit{retained performance} $\varrho$ when evaluating zero-shot under \variant{translated} inputs (left, shifts the absolute reference frame while keeping relative structure) and \variant{shuffle} (middle, permutes the object list while preserving the scene). Bars compare absolute coordinates and \textit{player-centric} (\relcoords{}) representations across architectures: \textbf{OCT} (transformer over objects), \textbf{\ppocnn{}} (pixel CNN; no relative variant), and \textbf{PPObj} (MLP over objects). Higher is better. IQM over $5 \cdot 10$ episodes.}
\label{fig:robustness}
    \label{fig:drop}
\end{figure}

\paragraph{RQ3 --- ORBiT is more robust to perturbations.}
We evaluate agents' zero-shot performance on three types of perturbations and report the retained performance $\varrho$ relative to clean performance. Results can be seen in \cref{fig:robustness}.
\variant{Translate} trims a random portion of the empty border, shifting the game content without cutting any of it and keeping the relative scene unchanged. For object-based agents, the same pixel offset is added to all extracted coordinates.
\variant{Shuffle} permutes the object set indices: the scene is identical, but any reliance on slot order breaks.
Note that \variant{Translate} and \variant{HackAtari} variants are only available on a set of 10 of 13 games, though not the same 10; details and full per-game numbers in \cref{app:ext}. Examples are in \cref{app:perturbations}.

As shown in \cref{thm:invariance} (\cref{app:invariance_proof}), \relcoords{} is guaranteed to be invariant to transformations in the coordinate frame. Hence, these results on \variant{Translate} serve as an empirical check of the follow-up \cref{cor:policy-equivariance}.
OCT loses $\Delta = -50\%$ under global and is essentially unaffected ($\approx 1\%$) under player-centric (ORBiT).
PPObj drops 44\% under global inputs and is also essentially unaffected ($5\%$) with \relcoords{}. Since \relcoords{} inputs are identical by construction, deviations from 100\% reflect evaluation stochasticity.
\ppocnn{} collapses by 88\%, with no anchor-relative variant to compare.
Looking at \variant{shuffle}, OCT is unaffected regardless of representation ($\Delta \approx 0$) --- permutation-invariance holds at test time and does not depend on \relcoords{} being active.
PPObj drops 85--93\% regardless of representation; its MLP has no analogous bias, and \relcoords{} does not compensate.

In the \framework{HackAtari} results, we observe that the global coordinates in OCT and PPObj exhibit poor generalization, with performance dropping by 30\% and 62\%, respectively. \relcoords{} helps mitigate this, with PPObj using player-centric coordinates, reducing the drop to 7\% and ORBiT dropping by only 4\%, both comparable to \ppocnn{} (7\%).

These results highlight the biggest advantages of using \relcoords{} with a transformer. 
\relcoords{} prevents catastrophic failure when the information in the game screen is slightly shifted in the observations (\variant{translate}).

\subsection{Continuous Control and Anchor Choice}
\begin{figure}
    \centering
    \begin{subfigure}[b]{0.46\linewidth}
        \centering
        \includegraphics[width=\linewidth]{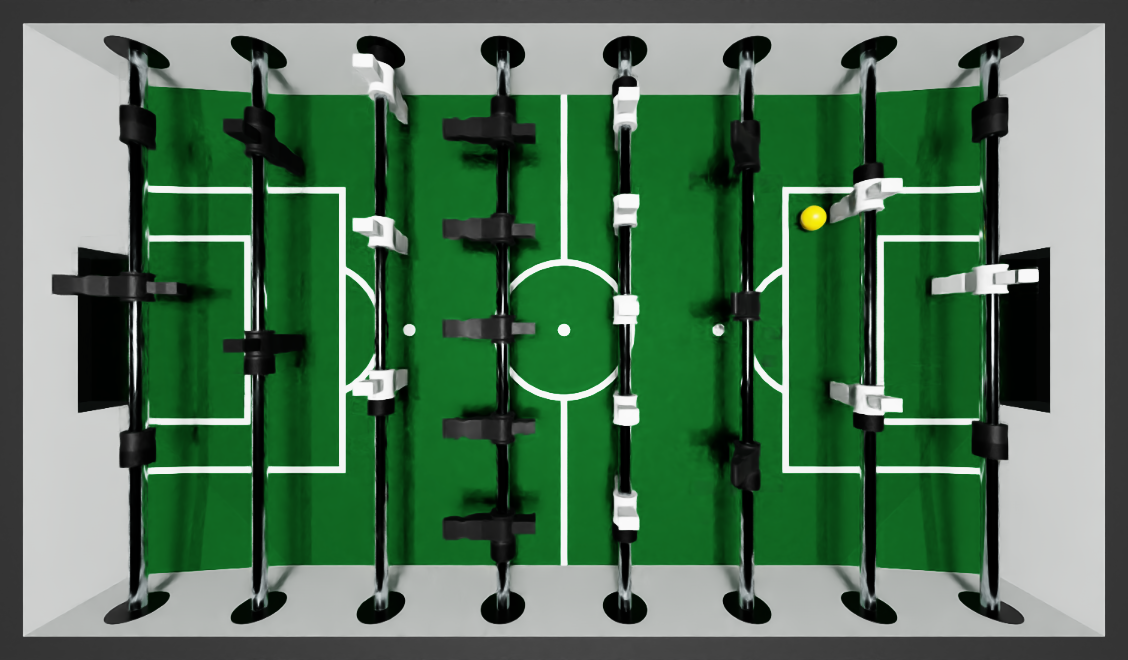}
        \vspace{16mm}
        \caption{The \framework{Foosball} environment}
    \end{subfigure}%
    \hfill
    \begin{subfigure}[b]{0.50\linewidth}
        \centering
        \includegraphics[width=\linewidth]{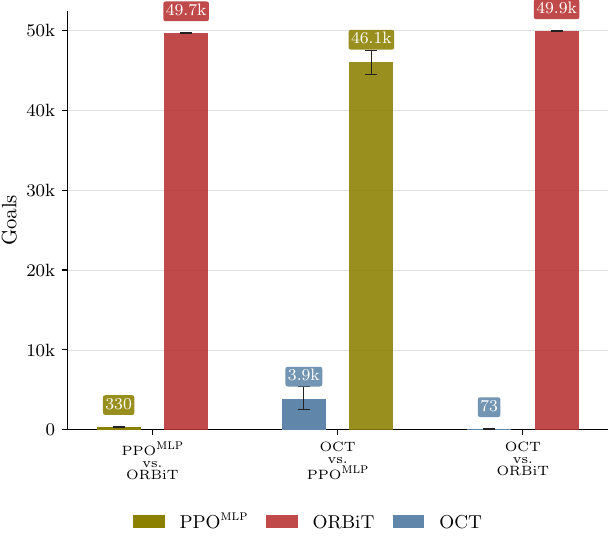}
        \caption{Head-to-head goals}
    \end{subfigure}%
    \caption{\textbf{ORBiT consistently dominates in the \framework{Foosball} environment (a).} (b) Both \ppomlp{} and OCT are decisively outperformed by ORBiT; they score only 330 and 73 goals out of 50k. With \relcoords{}, ORBiT learns effective policies, while OCT (absolute coordinates) remains far behind \ppomlp{}. Full per-match numbers are in \cref{app:ext}.}
    \label{fig:foosball}
\end{figure}

~
\vspace{-0.8cm}
\paragraph{RQ4 --- Beyond Atari,  relative coordinates (\relcoords{}) also boost continuous-control tasks.}

To investigate this, we have used ORBiT on an additional task: \game{Foosball}~(see \citet{foosball}, \cref{fig:foosball}).

We trained the agents to play the full game of \game{Foosball} (all rods with all figurines) with 4 rods per player using the \emph{mixed self-play setup} (see \cref{app:foosball}) and the ball as the anchor object. All algorithms, i.e., ORBiT, OCT, and \ppomlp{}, were trained on 5 different random seeds, producing 15 \game{Foosball} agents. The details of the training setup are in \cref{app:foosball}.

To evaluate the different approaches, we compare them by pairing every seed from one algorithm with every seed from the other. Each match-up is 1,000 goals per side (black/white), totaling 50,000 goals per algorithm pair. The results are shown in \cref{fig:foosball}b. We observe that \ppomlp{} scored only 330 goals against ORBiT, while ORBiT shot approximately 49.7k goals, accounting for over 99\% of all possible goals. The results of ORBiT (\relcoords{}) vs. OCT (absolute coordinates) are even clearer, with OCT only able to score 73 times out of 50,000.
This not only shows the strong benefit of \relcoords{} over absolute coordinates, but also ORBiT's dominance in the Foosball task. Neither baseline approach comes even close to ORBiT's strong policy. Detailed results of the different match-ups are in \cref{app:ext}.

To the best of our knowledge, ORBiT enables the first autonomous agent trained from scratch via self-play to play the full game of \game{Foosball}. This demonstrates that ORBiT can scale effectively to continuous, highly dynamic physical domains with complex multi-body coordination. These results suggest that \relcoords{} and ORBiT are promising candidates for real-world robotic control, though validating them on physical hardware with imperfect object detection remains future work.

\paragraph{RQ5 --- Choosing a meaningful anchor is necessary for ORBiT to work.}

Atari and \textit{Foosball} fix the orientation of the objects, the number of objects, and the anchor; \game{Tag} lets us vary each in isolation.

\begin{figure}
    \centering
        \begin{subfigure}[b]{0.56\linewidth}
        \centering
        \includegraphics[width=0.4\linewidth]{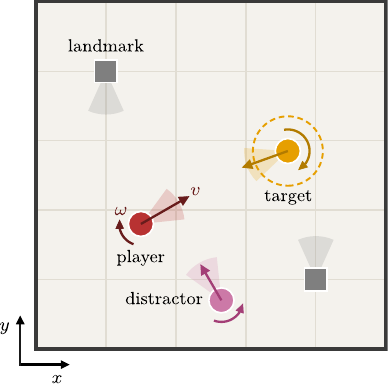}
        \hspace{3mm}
{        \includegraphics[width=0.5\linewidth]{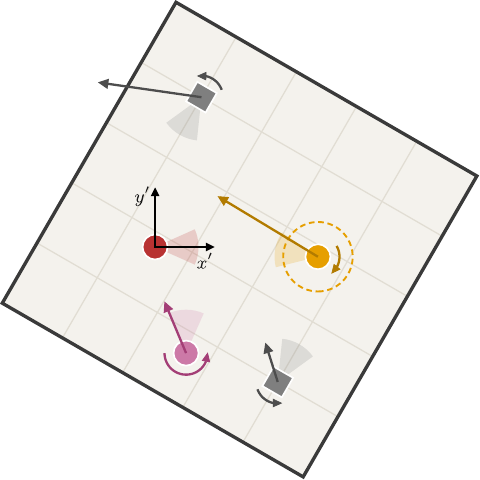}
        \vspace{11mm}}
        \caption{From Absolute to \relcoords{}}
    \end{subfigure}%
    \hspace{2mm}
    \begin{subfigure}[b]{0.4\linewidth}
        \centering
        \includegraphics[width=\linewidth]{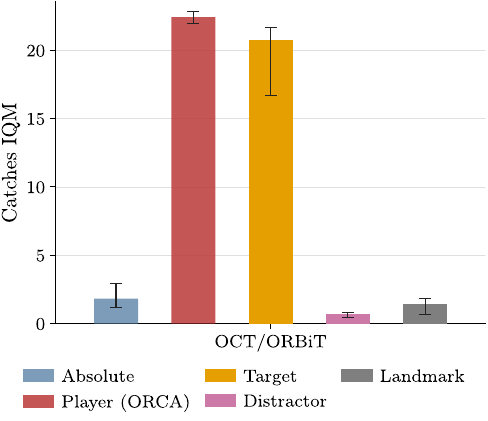}
        \caption{Results for Different Anchor Choices}
    \end{subfigure}%
    \caption{\textbf{In \game{Tag}, ORBiT outperforms absolute coordinates, but the anchor choice matters.} The \relcoords{} observations compute all object observations relative to an anchor object. (a) For example, selecting the player as the anchor transforms absolute (left) into relative (right) representations via a kinematic transformation that sets the player as a static origin (aligned with its heading) and applies the player's linear velocity $v$ and angular velocity $\omega$ in reverse to all objects. The headings, shown as shaded cones, match the direction of motion in the absolute frame but not in the player's frame: every object inherits the player's reversed velocity, plus a sideways velocity due to its rotation. Even static landmarks thus appear to move, and the target's velocity points past the player, although the target heads almost straight at it. The figure is a schematic, with arrow lengths and turn arcs not drawn to scale. (b) The performance results vary strongly for different anchor choices. Choosing an important game object, such as the player, clearly dominates a randomly moving anchor (distractor).}
    \label{fig:tag}
\end{figure}

ORBiT accepts sets of any size by construction, but accepting a set of unseen size does not mean it performs well on it. Since distractors differ from targets only in type (represented by color), an agent that attends to targets should not care about how many distractors surround it. We train with $m = 5$ distractors and evaluate with $m' \in \{0, 1, 5, 10\}$, including more distractors than seen during training. PPObj concatenates a fixed number of object slots ($N_{\max} = 9$: 1 player, 1 target, 2 landmarks, 5 distractors) and cannot be evaluated beyond it. ORBiT's number of catches stays the same for every $m'$, whereas OCT slightly falls with the number of distractors (see \cref{app:ext}).

\Cref{thm:invariance} holds for any object that is part of the scene. We therefore train ORBiT with four different anchors: the player (default), the target, a distractor, and a static landmark at $\mathbf{p}^L$, and measure the influence of anchor choice. The landmark anchor is the informative case. Since $\mathbf{p}^i_t - \mathbf{p}^L$ differs from absolute coordinates only by a constant, it preserves invariance to frame shifts. The player and target anchors reach IQMs of 22.4 and 20.7, the distractor anchor 0.7, and the landmark anchor 1.4, close to OCT (\cref{fig:tag}b). This highlights the importance of the choice of anchor. 

Together, ORBiT is unaffected by the number of distractors, while the choice of anchor decides the amount of improvement through \relcoords{}. PPObj results are in \cref{app:ext}.

\section{Discussion}
\label{sec:discussion}
An agent receiving world-frame positions must learn that the same relative configuration, i.e., a cauldron 5 pixels to the right of the player in \game{Asterix}, is equivalent everywhere it can occur, and thus the agent must learn it again at every location. This is evident in the results of the \variant{translated}-frame experiment. 
\relcoords{} states the equivalence directly: two scenes differing by a global translation produce identical inputs.
The robustness results follow from the same property, since perturbations that shift absolute coordinates without changing relative structure are invisible to an \relcoords{} policy and catastrophic for one trained on world-frame inputs.

We showed this in three domains. In \game{Atari}, PPObj and OCT benefit from \relcoords{}, with ORBiT being the strongest agent. In \game{Foosball} and \game{Tag}, this is even more prominent. We expect the advantage of \relcoords{} to scale with how often a task reuses relative configurations across the world frame, and to shrink wherever the world frame is itself the task structure.

We do not claim this inductive bias is novel: anchoring observations to the acting agent has proven effective in multi-agent RL~\citep{shang2021agent}. Object-centric RL, however, has largely retained absolute coordinates, feeding object features to the policy in a fixed global frame. Our results show that revisiting this input convention yields substantial gains in both performance and robustness.

\paragraph{Broader Applicability.}
\relcoords{} is a drop-in observation wrapper, with no change to the policy architecture, the training algorithm, or the reward. As such, any pipeline that consumes absolute-coordinate object vectors can adopt it by substituting a single transformation at the input. Furthermore, our \game{Foosball} results show that ORBiT handles fast, continuous control in simulation. Although a game, \game{Foosball} shares key demands with real-world robotics, including fast system dynamics and 8-DoF continuous action execution. This makes ORBiT a promising candidate for robotic and other continuous-control domains, pending validation on physical hardware with imperfect object detection.

\paragraph{Limitations and Future Work.}
First, our current implementation is constrained to 2D environments; extending this approach to 3D continuous spaces will be necessary for broader real-world robotic deployment. Second, our state formulation evaluates coordinates relative to a single target entity. This anchor is currently selected manually per environment based on domain knowledge; learning to identify the most decision-relevant anchor automatically would remove this design choice and make the approach more readily transferable to new tasks. Our evaluation in \game{Tag} highlights this matter; with an anchor that is not decision-relevant, most of the gain disappears. Looping back to relational RL, i.e., modeling inter-object spatial relationships across all entities in the scene, is a promising direction for enhancing relational reasoning in the context of relative coordinate frames. Further, we assume a perfect object extractor in our work. The influence of detection quality should be studied in general for OCRL.
Finally, in most of our experiments, we restricted our inputs to positional coordinates and velocities only. Studying richer feature spaces, such as object bounding dimensions, dynamic properties, or semantic labels, could unlock further improvements in OCRL.

\section{Related Work}
\label{sec:related_work}

\paragraph{Object-Centric Representations in deep RL.}
Unlike traditional pixel-based inputs or global feature vectors, object-centric views abstract the environment into discrete entities, enabling relational reasoning and compositional learning. Foundational methods such as Slot Attention~\citep{locatello2020object} have enabled unsupervised object discovery, while platforms like OCAtari~\citep{delfosse2024ocatari} provide structured object-level access to classic benchmarks. Empirical studies report the benefits of such representations for policy learning~\citep{ye2025learning,ye2026boosting} and transferability~\citep{yoon_investigation_2023, DidolkarZGMBMS25}. More recently, \citet{ugadiarov2025relational} combine actor-critic learning with an object-centric world model to exploit relational structure in RL. In contrast, ORBiT is model-free and focuses on the representational choice itself: objects are encoded relative to a task-relevant anchor rather than in absolute coordinates. While most approaches encode objects in a global coordinate frame, they frequently overlook the perspective-dependent structure that often guides agent behavior.

\paragraph{Relational Reinforcement Learning and  Object-Oriented MDPs.} Similar to ORBiT, Relational Reinforcement Learning~\citep{dvzeroski2001relational,van2004challenges,kersting2004bellman} and Object-Oriented MDPs (OO-MDPs)~\citep{diuk2008object}, address the following problem: how to exploit objects and their relations so an RL agent can generalize across environments and different numbers of objects. The main difference from ORBiT is their focus: classical Relational RL, on the one hand, typically uses first-order logic or relational rules to describe relations. OO-MDP, on the other hand, centers around the object interactions within the transition model.
In contrast, ORBiT instead utilizes an egocentric view, encoding objects relative to a specified anchor object. 

\paragraph{Transformers for Relational and Object-Centric RL.}
Transformers have become increasingly popular in RL due to their capacity to model relational structure via self-attention. For example, the Set Transformer~\citep{lee2019set} introduced an attention-based architecture for permutation-invariant processing of unordered sets. Early applications, such as the Decision Transformer \citep{chen2021decisiontransformer}, framed RL as a sequence-modeling problem, while others \citep{parisotto20, hu2021updetuniversalmultiagentreinforcement, yang_STT, esslinger_deep_2022} adapted transformers for partially observable or multi-agent settings. 
In object-centric settings, transformers have been used to model interactions between entities \citep{zadaianchuk2020selfsupervisedvisualreinforcementlearning, mambelli2022compositionalmultiobjectreinforcementlearning}, though these models often rely on goal conditioning or absolute positions, thereby limiting their ability to generalize across spatial variations. Additionally, they typically employ aggregation mechanisms that are sensitive to input size or ordering, reducing flexibility in dynamic environments. More recently, \citet{haramatientity} introduced an entity interaction transformer for OCRL that combines object-centric representations with attention-based state, goal, and entity-interaction modeling, enabling multi-object manipulation from pixels and generalizing to more objects than were seen during training. %
ORBiT complements these works. It uses a transformer to reason over sets of objects in relative frames of reference and therefore leans on anchor-relative object representations as an inductive bias to guide and elevate learning.

\paragraph{Inductive Biases for Spatial Generalization.}
A growing body of work explores how inductive biases can enhance generalization in spatially structured tasks. Works on relational architectures, including Graph Networks~\citep{battaglia2018relational}, argue that representing entities, relations, and compositional structure provides a useful bias for systematic generalization. Building on this perspective, transformer-based architectures with explicit relational structure~\citep{altabaa2023abstractors, altabaa2024disentangling} and object-centric attention mechanisms~\citep{bluml2026ocp} have shown promising results in learning efficiency and transfer. These methods highlight the importance of modeling relations between entities, but they typically assume absolute positions in global coordinate frames. Drawing inspiration from cognitive science and neuroscience, recent work has suggested that egocentric inductive biases can improve robustness and behavioral relevance in reinforcement learning systems (see next paragraph). Our work follows this motivation, but instead of relying on explicit graph structure, we use self-attention over object tokens and introduce anchor-relative spatial coordinates as an inductive bias for object-centric RL.

\paragraph{Egocentric and Relative Representations.}
The distinction between egocentric (agent-relative) and allocentric (world-relative) representations has been extensively studied in cognitive science and neuroscience, with evidence that biological agents frequently operate in egocentric frames to support robust spatial reasoning \citep{klatzky1998allocentric, galati2000neural, burgess2006spatial}. Similar principles have informed navigation and control in robotics, where encoding observations relative to an agent or task-relevant anchor improves sample efficiency and generalization \citep{hu2025egocentric}.
Despite these precedents, most RL approaches continue to rely on absolute spatial encodings.
ORBiT also operates on sets of object observations, but our focus is on RL with anchor-relative object representations. Agent-centric methods have been explored in multi-agent RL, where representing observations relative to the acting agent improves sample efficiency and generalization~\citep{shang2021agent}. Our work brings this idea to single-agent object-centric RL by encoding each object relative to a task-relevant anchor and combining this with a transformer-based policy.

\paragraph{Playing Foosball.} 
Aside from the methodological part of this work, we test our approach on the game of \game{Foosball}.
\Citet{foosball} identify Foosball as a challenging robotic environment and an important stepping stone towards more complex robotics applications. In recent years,  a couple of works have automated a Foosball table and created a digital twin~\citep{soopen,croenen276604268reinforcement,rohrer2021foosball,de2021kicker}, highlighting the interest in, and challenges of this task.
While the idea of an automated Foosball table is not new~\citep{weigel2002kiro}, the majority of Foosball agents are still rule-based and hand-coded~\citep{ZaloznikSoln2024}. As training the full game with 4 rods from scratch is demanding, much of the work focuses on learning individual skills such as scoring, passing, and blocking~\citep{foosball,croenen276604268reinforcement}. In contrast, in this work, we use ORBiT to learn to play \game{Foosball} from scratch via self-play using the PPO algorithm. For this, we base our implementation on the work of \citet{foosball} and introduce mixed self-play, a form of domain randomization~\citep{domain_rand}, to boost early-stage training by enabling training across highly varied strategic scenarios.

\section{Conclusion}
Object-centric RL uses the fixed world frame as its dominant coordinate system. We revisit this convention and present an alternative.
\Relcoordslong{} expresses each object's position and features in the frame relative to a task-relevant anchor object. \relcoords{} does not introduce any learnable parameters, nor impose any constraints on the approaches for which this input representation can be used, except that access to the objects themselves is required. We pair \relcoords{} with a transformer to create ORBiT.

Across our test set of \numgames{} \framework{Atari} games, where we replaced absolute coordinates with \relcoords{}, both the transformer and the MLP outperformed their baselines (see \cref{fig:aggregate}). Under zero-shot perturbations, described in \cref{sec:experiments}, we demonstrate that \relcoords{} and ORBiT both improve robustness in out-of-distribution examples.

Beyond \framework{Atari}, ORBiT learns the full game of \game{Foosball}, an $8$-DoF continuous-control task, from scratch via self-play. Against it, PPO$^{\text{MLP}}$ scores only $330$ and OCT only $73$ of $50{,}000$ goals. In \game{Tag}, where the anchor carries an orientation, ORBiT reaches $22.4$ catches per episode versus $1.8$ for OCT. Although trained with $5$ distractors, it maintains this performance across different numbers of distractors from $0$ to $10$. Furthermore, it shows the importance of the choice of anchor, with semantically meaningful objects like the player object or the target showing better performance than static landmarks or distractors.

Together, these results suggest that object-centric RL should use relative rather than absolute coordinates whenever the task provides a meaningful anchor. Selecting the anchor automatically, combining several anchors, and extending \relcoords{} to 3D environments and imperfect object detectors are the next steps towards extending ORBiT.

\subsubsection*{Acknowledgments}
This research work has been funded by the Deutsche Forschungsgemeinschaft (DFG, German Research Foundation) under Germany's Excellence Strategy (EXC-3057/1 ''Reasonable Artificial Intelligence``, Project No. 533677015).

\bibliography{clean}
\bibliographystyle{tmlr}

\appendix
\crefalias{section}{appsec}
\clearpage

\section{Evaluation Metrics}
\label{app:metrics}
To evaluate the agents' performance, we use three metrics following best practices from \citet{agarwal2021deep}: (1) \textbf{Human-Normalized Score (HNS)}, (2) \textbf{Interquartile Mean (IQM)}, and (3) \textbf{95\% Confidence Intervals (CI)}.

\paragraph{Human-Normalized Score (HNS).} 
To compare performance across different \framework{ALE} environments with varying reward ranges, we use HNSs introduced by \citet{mnih2015human}, defined via mean agent score $S_A$, the human score $S_H$, and the random score $S_R$:
\begin{equation}
\text{HNS} = \frac{S_A - S_R}{|S_H - S_R|}.
\end{equation}
A value of 1.0 resembles human-level performance, while values near 0 indicate behavior close to random. Scores exceeding 1.0 imply superhuman performance. The human and random scores are taken from \citet{BadiaPKSVGB20}.

\paragraph{Aggregation via Interquartile Mean (IQM).} 
To ensure robustness against outliers, we aggregate scores using the \textbf{IQM}, which computes the mean over the middle 50\% of the data (the 25th to 75th percentiles). For our experiments, we report the IQM computed using the \variant{rliable} library~\citep{agarwal2021deep}.

\section{Proof of Rigid Invariance}
\label{app:invariance_proof}

We give a formal proof of the claim made in \cref{sec:aor}: When each object position $\mathbf{p}^i$ is expressed relative to an anchor position $\mathbf{p}^\psi$, the agent's policy is invariant to rotations $\mathbf{R}_g$ and translations $\mathbf{b}_g$ of the coordinate system.
The theorem below makes this precise.

\subsection{Setup}

Let $d$ denote the spatial dimension of the environment; $d = 2$ in all of our experiments.
We write $\mathit{SO}(d) = \{\mathbf{Q} \in \mathbb{R}^{d \times d} : \mathbf{Q}^\top \mathbf{Q} = \mathbf{I},\ \det \mathbf{Q} = 1\}$ for the group of rotations about the origin, and
\begin{equation}
  \mathit{SE}(d) \;=\; \mathit{SO}(d) \ltimes \mathbb{R}^d
\end{equation}
for the special Euclidean group of rigid motions, whose elements $g = (\mathbf{R}_g, \mathbf{b}_g)$ pair a rotation $\mathbf{R}_g \in \mathit{SO}(d)$ with a translation $\mathbf{b}_g \in \mathbb{R}^d$. 
Concretely, for $d = 2$,
\begin{equation}
  \mathit{SO}(2) = \left\{ \mathbf{R}(\alpha) = \begin{pmatrix} \cos\alpha & -\sin\alpha \\ \sin\alpha & \cos\alpha \end{pmatrix} : \alpha \in [0, 2\pi) \right\},
\end{equation}
the rotations of the plane about the origin, and $\mathit{SE}(2)$ additionally allows the origin to move.
Every element of $\mathit{SE}(2)$ is only a rotation followed by a shift; hence,  rigid motions preserve all distances and angles between points, which is why they leave the relations among objects intact while changing every absolute coordinate.

At time $t$, the agent receives $\mathcal{O}_t = \{o^i_t\}_{i=1}^{N_t}$, where $o^i_t = (\mathbf{c}^i_t, \mathbf{p}^i_t, \mathbf{v}^i_t)$ are intrinsic attributes, position, and velocity in the world frame.
The anchor $\psi_t$ has position $\mathbf{p}^\psi_t$, velocity $\mathbf{v}^\psi_t$, and orientation $\boldsymbol{Q}_t \in \mathit{SO}(d)$.
Given $\mathbf{R}_t = \boldsymbol{Q}_t^\top$, the \relcoords{} transformation is
\begin{align*}
T(\psi_t, o^i_t)
&=
\bigl[\, \mathbf{c}^i_t,\;
\mathbf{R}_t \cdot (\mathbf{p}^i_t - \mathbf{p}^\psi_t),\;
\mathbf{R}_t \cdot (\mathbf{v}^i_t - \mathbf{v}^\psi_t) \,\bigr] \\
\end{align*}
and the \relcoords{} observation is $\tilde{\mathcal{O}}_t = \{T(\psi_t, o^i_t)\}_{i=1}^{N_t}$.

Let a rigid transformation $g = (\mathbf{R}_g, \mathbf{b}_g) \in \mathit{SE}(d) = \mathit{SO}(d) \ltimes \mathbb{R}^d$ act by
\begin{align*}
g \cdot \mathbf{c} &= \mathbf{c}
  &&\text{(intrinsics are coordinate-free)},\\
g \cdot \mathbf{p} &= \mathbf{R}_g\mathbf{p} + \mathbf{b}_g,\\
g \cdot \mathbf{v} &= \mathbf{R}_g\mathbf{v}.
\end{align*}
The velocity is not shifted because it corresponds to the first derivative of the position, canceling all static translations.
It also rotates the anchor's frame: $g \cdot \boldsymbol{Q}_t = \mathbf{R}_g\boldsymbol{Q}_t$, giving $g \cdot \mathbf{R}_t = \mathbf{R}_t\mathbf{R}_g^\top$.
We write $(g \cdot \psi_t,\, g \cdot \mathcal{O}_t)$ for the scene with $g$ applied jointly to every object and to the anchor.

\subsection{Invariance theorem}

\begin{theorem}[Joint rigid invariance of \relcoords{}]
\label{thm:invariance}
For every $g \in SE(d)$, anchor $\psi_t$, and object set $\mathcal{O}_t$,
\begin{equation}
    \tilde{\mathcal{O}}^g_t = \{T(g \cdot \psi_t, g \cdot o^i_t)\}_{i=1}^{N_t}
    \stackrel{!}{=}
    \{T(\psi_t, o^i_t)\}_{i=1}^{N_t} = \tilde{\mathcal{O}}_t
\end{equation}
\end{theorem}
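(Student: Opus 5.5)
The plan is a direct componentwise computation. I will show that $T(g\cdot\psi_t, g\cdot o^i_t) = T(\psi_t, o^i_t)$ holds for each index $i$ separately. Equality of the two sets then follows immediately, because the sets are images of the same index set $\{1,\dots,N_t\}$ under maps that agree pointwise. Since $T$ is a concatenation of three blocks (intrinsic, position, velocity), I will treat each block in turn. The only inputs needed are the action of $g$ fixed in the setup, namely $g\cdot\mathbf{c}=\mathbf{c}$, $g\cdot\mathbf{p}=\mathbf{R}_g\mathbf{p}+\mathbf{b}_g$, $g\cdot\mathbf{v}=\mathbf{R}_g\mathbf{v}$, and the induced frame $g\cdot\mathbf{R}_t=\mathbf{R}_t\mathbf{R}_g^\top$. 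The last one comes from $g\cdot\mathbf{Q}_t=\mathbf{R}_g\mathbf{Q}_t$ and transposition.

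\textbf{Intrinsic block.} This is trivial: $\boldsymbol{\phi}_{\mathrm{int}}$ returns $\mathbf{c}^i_t$, and $g$ acts on it as the identity.

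\textbf{Position block.} The translation cancels in the difference:
\[
(\mathbf{R}_g\mathbf{p}^i_t+\mathbf{b}_g)-(\mathbf{R}_g\mathbf{p}^\psi_t+\mathbf{b}_g)=\mathbf{R}_g(\mathbf{p}^i_t-\mathbf{p}^\psi_t).
\]
Applying the transformed frame $\mathbf{R}_t\mathbf{R}_g^\top$ and using $\mathbf{R}_g^\top\mathbf{R}_g=\mathbf{I}$, which holds because $\mathbf{R}_g\in \mathit{SO}(d)$, returns $\mathbf{R}_t(\mathbf{p}^i_t-\mathbf{p}^\psi_t)$.

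\textbf{Velocity block.} This is the same computation without the translation term. The difference becomes $\mathbf{R}_g(\mathbf{v}^i_t-\mathbf{v}^\psi_t)$, and the frame $\mathbf{R}_t\mathbf{R}_g^\top$ again cancels $\mathbf{R}_g$.

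The anchor's self-token $T(\psi_t,\psi_t)$ is covered by the same computation, taking $o^i_t=\psi_t$.

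The main point to handle carefully is the transformation rule for the anchor's frame, $g\cdot\mathbf{R}_t=\mathbf{R}_t\mathbf{R}_g^\top$. Everything hinges on the rotation entering on the right and being transposed, so that it exactly undoes the $\mathbf{R}_g$ picked up by the differences. I would state this explicitly as a one-line consequence of $\mathbf{R}_t=\mathbf{Q}_t^\top$ and $(\mathbf{R}_g\mathbf{Q}_t)^\top=\mathbf{Q}_t^\top\mathbf{R}_g^\top$. Finally, I would note the orientation-free specialization used for Atari and Foosball. There $\mathbf{R}_t=\mathbf{I}$ and one restricts to $g$ with $\mathbf{R}_g=\mathbf{I}$, i.e.\ pure translations, so only the cancellation of $\mathbf{b}_g$ is needed.
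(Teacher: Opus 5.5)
Your proposal is correct and matches the paper's proof. Both argue index by index and block by block: the intrinsics are fixed, $\mathbf{b}_g$ cancels in the position difference, and the transformed frame $\mathbf{R}_t\mathbf{R}_g^\top$ undoes the $\mathbf{R}_g$ picked up by both the position and velocity differences. Your explicit derivation of $g\cdot\mathbf{R}_t$ and your translation-only specialization also mirror the paper's setup and its remark on the Atari and Foosball experiments.
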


\begin{proof}
To show this equivalence, it suffices to show that $T(g \cdot \psi_t, g \cdot o^i_t) = T(\psi_t, o^i_t)$ for each $i$. Let us do this by looking at all elements of $o^i_t$ separately:

\emph{Intrinsics.}
$g \cdot \mathbf{c}^i_t = \mathbf{c}^i_t$ by definition. As such, all intrinsic information is invariant.

\emph{Positions.}
Since we apply $\mathbf{b}_g$ to all objects, we can show that $\mathbf {b}_g$ cancels out. As such, we can show that positions are invariant under positional and rotational change:
\begin{align*}
  &(g \cdot \mathbf{R}_t)\bigl((g \cdot \mathbf{p}^i_t) - (g \cdot \mathbf{p}^\psi_t)\bigr)\\
  &= \mathbf{R}_t\mathbf{R}_g^\top
     \bigl((\mathbf{R}_g\mathbf{p}^i_t + \mathbf{b}_g)
          - (\mathbf{R}_g\mathbf{p}^\psi_t + \mathbf{b}_g)\bigr)
     &&\text{(substitute group action)}\\
  &= \mathbf{R}_t\mathbf{R}_g^\top
     \bigl(\mathbf{R}_g\mathbf{p}^i_t - \mathbf{R}_g\mathbf{p}^\psi_t\bigr)
     &&\text{($\mathbf{b}_g$ cancels)}\\
  &= \mathbf{R}_t\mathbf{R}_g^\top\mathbf{R}_g
     \,(\mathbf{p}^i_t - \mathbf{p}^\psi_t)
     &&\text{(factor $\mathbf{R}_g$)}\\
  &= \mathbf{R}_t\cdot(\mathbf{p}^i_t - \mathbf{p}^\psi_t).
     &&\text{($\mathbf{R}_g^\top\mathbf{R}_g = \mathbf{I}$)}
\end{align*}

\emph{Velocities.}
The same cancellation holds without a translation term $\mathbf{b}$:
\begin{align*}
  &(g \cdot \mathbf{R}_t)(g \cdot \mathbf{v}^i_t - g \cdot \mathbf{v}^\psi_t)\\
  &= \mathbf{R}_t\mathbf{R}_g^\top
     (\mathbf{R}_g\mathbf{v}^i_t - \mathbf{R}_g\mathbf{v}^\psi_t)
     &&\text{(substitute group action)}\\
  &= \mathbf{R}_t\mathbf{R}_g^\top\mathbf{R}_g
     \,(\mathbf{v}^i_t - \mathbf{v}^\psi_t)
     &&\text{(factor $\mathbf{R}_g$)}\\
  &= \mathbf{R}_t\cdot(\mathbf{v}^i_t - \mathbf{v}^\psi_t).
     &&\text{($\mathbf{R}_g^\top\mathbf{R}_g = \mathbf{I}$)} \qedhere
  \end{align*}

From this $T(g \cdot \psi_t, g \cdot o^i_t) = T(\psi_t, o^i_t)$ follows.
\end{proof}

\subsection{Policy invariance}

\begin{corollary}
\label{cor:policy-equivariance}
Any policy $\pi(a \mid \tilde{\mathcal{O}}^g_t)$ satisfies
\[
\pi\!\left(\cdot \mid \tilde{\mathcal{O}}^g_t\right)
\;=\;
\pi\!\left(\cdot \mid \tilde{\mathcal{O}}_t\right)
\]
for all $g \in SE(d)$; the value function $V$ inherits the same property.
\end{corollary}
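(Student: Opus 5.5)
The corollary follows from \cref{thm:invariance} once we note that a policy acting on \relcoords{} inputs sees the scene only through $\tilde{\mathcal{O}}_t$. Formally, the \relcoords{} agent is a composite map
\[
\mathcal{O}_t \;\mapsto\; \tilde{\mathcal{O}}_t = \{T(\psi_t, o^i_t)\}_{i=1}^{N_t} \;\mapsto\; \pi(\cdot \mid \tilde{\mathcal{O}}_t),
\]
where $\pi$ is an arbitrary, well-defined function of its conditioning argument. Nothing about the parameterization of $\pi$ is used. Writing $\tilde{\mathcal{O}}^g_t$ for the \relcoords{} observation of the transformed scene $(g\cdot\psi_t, g\cdot\mathcal{O}_t)$, the plan is to show that the conditioning argument is unchanged by $g$. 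Equality of the output distributions then follows by substitution.

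\textbf{Step 1.} Invoke \cref{thm:invariance} to obtain $\tilde{\mathcal{O}}^g_t = \tilde{\mathcal{O}}_t$ for every $g \in \mathit{SE}(d)$, every anchor $\psi_t$, and every object set $\mathcal{O}_t$. The theorem already gives this token by token: $T(g\cdot\psi_t, g\cdot o^i_t) = T(\psi_t, o^i_t)$ for each $i$.

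\textbf{Step 2.} Make sure that ``equal input'' means the same thing to the policy as to the theorem. This is the main obstacle.

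\emph{Transformer policies (ORBiT).} The theorem equates sets of tokens. ORBiT embeds each token with a shared linear layer, applies self-attention without positional encodings, and mean-pools the outputs. It is therefore a function of the set, or more precisely the multiset, of tokens, so set equality suffices.

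\emph{Order-dependent policies (e.g.\ PPObj).} An MLP over a concatenated object list depends on slot order. Here I would use the stronger per-index statement from Step 1. The transformation $g$ acts on each object in place and does not reorder the list, so the ordered tuples $(T(g\cdot\psi_t, g\cdot o^i_t))_{i=1}^{N_t}$ and $(T(\psi_t, o^i_t))_{i=1}^{N_t}$ coincide entrywise. Any function of the ordered tuple therefore agrees on the two.

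With this distinction stated, substituting into $\pi$ gives $\pi(\cdot\mid\tilde{\mathcal{O}}^g_t) = \pi(\cdot\mid\tilde{\mathcal{O}}_t)$. The argument covers stochastic policies as well: the right-hand side is one fixed distribution determined by its argument, so equality holds as distributions over actions.

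\textbf{Step 3.} Treat the value function identically. The critic $V(\tilde{\mathcal{O}}_t)$ is also a function of the \relcoords{} observation alone; in ORBiT it is a linear head on the same pooled vector $\hat{h}_t$. Hence $V(\tilde{\mathcal{O}}^g_t) = V(\tilde{\mathcal{O}}_t)$.

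\textbf{Closing remark.} I would note that this is invariance of the action distribution expressed in the policy's own output space. When actions are defined in the world frame and the anchor has a meaningful orientation (as in \game{Tag}), the executed world-frame action must be mapped back through $\mathbf{Q}_t$. The composite world-frame policy is then equivariant rather than invariant. This explains the corollary's label and agrees with the discussion in \cref{sec:aor}.
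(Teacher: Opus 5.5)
Your proposal is correct and matches the paper's proof, which is the one-line observation that \cref{thm:invariance} makes the two conditioning inputs identical, so any policy and any value function return the same output. Your Step~2, which uses the per-index equality $T(g\cdot\psi_t, g\cdot o^i_t) = T(\psi_t, o^i_t)$ to also cover order-dependent policies such as PPObj, is a sound refinement that the paper leaves implicit.
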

\begin{proof}
The two inputs are identical by \cref{thm:invariance}.
\end{proof}

Two scenes related by any rigid transform produce the same action distribution, with no data augmentation or auxiliary loss required.

\subsection{Permutation invariance}

Rigid invariance and permutation invariance are independent properties handled at different layers.
\relcoords{} is applied element-wise, so $\tilde{\mathcal{O}}_t$ is an unordered set; whether the policy respects that structure depends on the encoder.
ORBiT uses a transformer with mean-pooling, which is permutation-invariant by construction. MLPs are not.
Losing either guarantee can cause catastrophic failure. Agents without \relcoords{} fail under \variant{translations}; agents without a set-equivariant encoder fail under \variant{shuffles} (\cref{fig:robustness}).
ORBiT satisfies both.

\begin{remark}[What the experiments exercise]
\label{rem:translation-only}
The Atari player objects and the ball in Foosball have no meaningful orientation, so we set $\mathbf{Q}_t = \mathbf{R}_t = \mathbf{I}$ throughout those experiments.
\Cref{thm:invariance} then only looks at $\{(\mathbf{I}, \mathbf{b}): \mathbf{b} \in \mathbb{R}^d\} \cong \mathbb{R}^d$, and \relcoords{} reduces to subtracting the anchor's position and velocity from every object.
Every \game{Atari} and \game{Foosball} result is, therefore, evidence for translation invariance only; the rotational half of the theorem is exercised only where the anchor carries an orientation, e.g., the \game{Tag} experiments.
\end{remark}

\section{The ORBiT Architecture}
\label{app:transformer}
This section gives details about the architecture used for ORBiT.

\paragraph{Transformer encoder.}
The encoder has $\ell$ stacked layers, each with multi-head self-attention (MHSA), a position-wise feedforward network (FFN), residual connections, and layer normalization~\citep{vaswani2017attention}.
The transformer is a natural fit here: it operates on sets rather than sequences, so no positional encodings are needed, and the encoder is permutation-equivariant by construction.
It also handles any $N_t$ without modification.

\paragraph{Aggregation}
The transformer encoder layers output as many tokens as objects in the input. Therefore, we need to aggregate them into a single vector before we can input them into the next neural network layers. There are several options to do this: \textit{max}-aggregation, which takes the elementwise maximum of the tokens, \textit{mean}-aggregation, which takes the elementwise mean of the tokens, just using the \textit{first} token (see e.g., \citet{yoon_investigation_2023}), \textit{sum}-aggregation, which takes the elementwise sum of the tokens. While the last one does not scale well with the number of objects, the former ones do.
In our early experiments, we found that \textit{mean}-aggregation worked best, so we used it for all our experiments.

\paragraph{Policy and Value Heads}
The aggregated vector feeds two linear heads: a policy head that produces a distribution over actions and a value head that produces a scalar state-value estimate.
The full network trains end-to-end with PPO~\citep{schulman2017PPO}, without auxiliary losses.
Hyperparameter details ($\ell$, $d$, attention heads, PPO schedule) are in \cref{app:hyperparams}.

\section{Hyperparameters and Code}
\label{app:hyperparams}

Key hyperparameters for our experimental setups are given in \cref{tab:hardware,tab:hyperparams_atari,tab:hyperparams_foosball,tab:hyperparams_orbits,tab:hyperparams_ppo}. The source code to replicate our findings is available at \url{https://anonymous.4open.science/r/ORBiT_TMLR} and will be made publicly available after acceptance.

\begingroup
\rowcolors{2}{}{shade}
\begin{table}[b]
    \centering
    \caption{Key hyperparameters regarding the used \textbf{Atari environments}. We are using Gymnasium and \framework{ALE}, following best practices from \citet{MachadoBTVHB18} and the community.}
    \setlength{\tabcolsep}{3pt}  %
    \renewcommand{\arraystretch}{1.1}  %
    \begin{tabular}{|l|c|}
        \hline
        \textbf{Hyperparameter}       & \textbf{Value} \\
        \hline
        ALE version                   & 0.8.1          \\
        Gymnasium version             & 0.29.1         \\
        Environment version           & v5             \\
        Frameskip                     & 4              \\
        Buffer Window Size            & 4              \\
        Observation Mode              & RGB            \\
        Repeat Action Probability     & 0.25           \\
        Full Action Space             & \variant{False} \\
        Continuous                    & \variant{False} \\
        \hline
        Number of Evaluation Episodes & 10 \\
        \hline
    \end{tabular}
    \label{tab:hyperparams_atari}
\end{table}
\endgroup

\begingroup
\rowcolors{2}{}{shade}
\begin{table}[ht]
    \centering
    \caption{Key hyperparameters used for \textbf{ORBiT}.}
    \label{tab:hyperparams_orbits}
    \setlength{\tabcolsep}{3pt}  %
    \renewcommand{\arraystretch}{1.1}  %
    \begin{tabular}{|l|c|}
    \hline
    \textbf{Hyperparameter} & \textbf{Value} \\
        \hline
        Number of Transformer Blocks ($\ell$) & $3$ \\
        Embedding Dimension ($d$) & $128$ \\
        Number of Attention Heads & 8 \\
        Include Width and Height & \texttt{False} \\
        \hline
    \end{tabular}
\end{table}
\endgroup

\begingroup
\rowcolors{2}{}{shade}
\begin{table}[t]
    \centering
    \caption{Key hyperparameters regarding the used \textbf{Foosball environments}.}
    \setlength{\tabcolsep}{3pt}  %
    \renewcommand{\arraystretch}{1.1}  %
    \begin{tabular}{|l|c|}
        \hline
        \textbf{Hyperparameter}   & \textbf{Value}       \\
        \hline
        Gym version               & 0.23.1               \\
        Task                      & FoosballMixedSelfplay \\
        DistToBallReward          & \variant{True}        \\
        WinReward                 & 150                  \\
        LossPenalty               & 150                  \\
        StagnationPenalty         & 150                  \\
        TerminationPenalty        & 150                  \\
        TimeoutPenalty            & 0                    \\
        \hline
    \end{tabular}
    \label{tab:hyperparams_foosball}
\end{table}
\endgroup

\begingroup
\rowcolors{2}{}{shade}
\begin{table}[hb!]
    \centering
    \caption{Key hyperparameters used for \textbf{training}, following \citet{shengyi2022the37implementation,huang2022cleanrl,foosball}. We adopt these parameters without additional tuning to ensure a fair comparison across all methods.}
    \label{tab:hyperparams_ppo}
    \setlength{\tabcolsep}{3pt}  %
    \renewcommand{\arraystretch}{1.1}  %
    \begin{tabular}{|l|c|c|}
    \hline %
    \textbf{Hyperparameter}                  & \textbf{Value (Atari)} & \textbf{Value (Foosball)} \\
    \hline %
    Seed                                     & \multicolumn{2}{|c|}{\{0,\,1,\,2,\,3,\,4\}}        \\
    Learning Rate ($\alpha$)                 & $2.5 \cdot 10^{-4}$    & $3 \cdot 10^{-4}$         \\
    Total Timesteps$^\dagger$/Max Epochs$^\ddagger$     & $^\dagger10^7$         & $^\ddagger6 \cdot 10^7$            \\
    Number of Environments                   & $10$                   & $4{,}096$                 \\
    Batch Size ($B$)                         & $1{,}280$              & $4{,}096$                 \\
    Minibatch Size ($b$)                     & $320$                  & $4{,}096$                 \\
    Update Epochs                            & $4$                    & $1$                       \\
    GAE Lambda ($\lambda$)                   & $0.95$                 & $0.95$               \\
    Discount Factor ($\gamma$)               & $0.99$                 & $0.99$                    \\
    Value Function Coefficient ($c_v$)       & $0.5$                  & $4.0$                     \\
    Entropy Coefficient ($c_e$)              & $0.01$                 & $0.0001$                  \\
    Clipping Coefficient ($\epsilon$)        & $0.1$                  & $0.2$                     \\
    Clip Value Loss                          & \variant{True}          & \variant{True}             \\
    Max Gradient Norm ($\|g\|_{\text{max}}$) & $0.5$                  & $1.0$                     \\
    \hline %
    \end{tabular}
\end{table}
\endgroup

\begingroup
\rowcolors{2}{}{shade}
\begin{table}[htb!]
    \centering
     \caption{\textbf{Hardware configuration} used for our experiments. We used NVIDIA DGX systems and followed standardized training pipelines.}
    \setlength{\tabcolsep}{3pt}
    \renewcommand{\arraystretch}{1.1}
    \begin{tabular}{|l|c|c|}
        \hline
        \textbf{Component} & \textbf{Specification (Atari)}   & \textbf{Specification (Foosball)} \\
        \hline
        System             & NVIDIA DGX v.5.1.0               & NVIDIA DGX v.7.5.0                \\
        GPUs               & 16 $\!\times\!$ NVIDIA V100 32GB & 8 $\!\times\!$ NVIDIA A100 40GB   \\
        CPUs               & Intel Xeon Platinum 8174         & AMD EPYC 7742                     \\
        System Memory      & 1.58 TB                          & 2.05 TB                           \\
        Operating System   & Ubuntu 20.04 LTS                 & Ubuntu 24.04   LTS                \\
        CUDA Version       & 12.4                             & 13.0                              \\
        Python Version     & 3.10                             & 3.10                              \\
        Docker Base Image  & nvcr.io/nvidia/pytorch:23.09-py3 & nvcr.io/nvidia/isaac-sim:4.0.0    \\
        \hline
    \end{tabular}
    \label{tab:hardware}
\end{table}
\endgroup

\clearpage
\section{Atari Games}
\label{app:games}

We trained all approaches in \numgames{} Atari 2600 games from the Arcade Learning Environment (ALE), training each algorithm with 5 random seeds: \game{Alien}, \game{Amidar}, \game{Asterix}, \game{Boxing}, \game{Breakout}, \game{Freeway}, \game{Frostbite}, \game{Gopher}, \game{Kangaroo}, \game{MsPacman}, \game{Pong}, \game{SpaceInvaders}, and \game{Tennis}.

\paragraph{HackAtari Modifications.} After training in the original environment, we also evaluated each agent in modified versions of these to test generalization using the \framework{HackAtari} framework~\citep{hackatari}. \Cref{tab:modifs} lists the used variants.

\begin{table}[t]
  \centering
  \caption{The \textbf{HackAtari modifications} used to evaluate generalization.}
  \label{tab:modifs}
  \begin{tabular}{ll}
    \toprule
    \textbf{Game} & \textbf{HackAtari Modification} \\
    \midrule
    \game{Alien} & \variant{last\_egg} \\
    \game{Amidar} & \variant{pig\_enemies} \\
    \game{Boxing} & \variant{switch\_position} \\
    \game{Breakout} & \variant{gravity} \\
    \game{Freeway} & \variant{stop\_all\_cars\_tunnel} \\
    \game{Frostbite} & \variant{reposition\_floes\_easy} \\
    \game{Kangaroo} & \variant{randomize\_kangaroo\_position} \\
    \game{MsPacman} & \variant{set\_level\_2} \\
    \game{Pong} & \variant{hidden\_enemy} \\
    \game{SpaceInvaders} & \variant{disable\_shield\_left} \\
    \bottomrule
  \end{tabular}
\end{table}

\section{Foosball}
\label{app:foosball}
For the \game{Foosball} task (see \cref{fig:foosball}), we build on the implementation of \citet{foosball}. However, we introduce some adaptations to make the environment fit for our use case. Firstly, we introduce \textit{mixed self-play} (FoosballMixedSelfplay). In this training setup, the Foosball table's rod configuration is randomized. The first Foosball table always has the full setup (i.e., the agent and the opponent control all 4 rods), but the other tables can vary. 
While the agent always controls at least one rod, the opponent does not; for example, the agent's keeper rod can be the only active rod on the table.
However, if there are opponent rods, the agent always controls a rod closer to its own goal than the opponent's offense-most rod. The ball is given randomly to any of the figurines on the active rods such that both agent and opponent may be in control of the ball at the beginning of an episode. This setup provides experiences in highly varied strategic scenarios.
Mixed self-play was used only for training; we always evaluated on the full game. 

The observations for \ppomlp{} are the same as defined in the original paper, i.e., the joint positions and velocities, as well as the ball's X- and Y positions.

In the object-centric setting, the agent can observe the ball, the goals, and all figurines of each team independently. 
Each object is passed to the agent as an 11d vector that consists of the following properties:
\begin{itemize}
    \item a one-hot embedding of the object type (5d),
    \item the object's X- and Y positions,
    \item the object's X- and Y velocities,
    \item the object's rotation around the Y axis (only non-zero for the figurines),
    \item the object's rotational velocity around the Y axis (only non-zero for the figurines).
\end{itemize}

\section{Tag}
\label{app:tag}

\game{Tag} is an object-centric 2D environment in a bounded square arena of side length $10$. Each episode contains the player, the target, five distractors, and two static landmark objects. The agent's task is to catch the moving target object. A catch is registered when the player comes within 1 unit of the target. On each catch, the agent receives a reward of $+1$, and the target, as well as the five distractors, are randomly redistributed in the environment more than 3 units away from the player, each with a random heading and a speed drawn uniformly from $[0, 0.5]$. The landmarks are positioned at $(2.5, 2.5)$ and $(7.5,7.5)$ with opposing heading angles. The episode ends either on catch or after 500 environment steps during training; during evaluation, it ends after 500 environment steps, and we report the number of catches made within this time frame.

\paragraph{Observations.} Each object is represented by a vector consisting of a one-hot type (player, target, distractor, landmark), its position, its orientation (heading angle), its speed, and its angular velocity. The target and the distractors differ only in type.

\paragraph{ORCA.} Since objects in \game{Tag} also have a heading and an angular velocity, ORCA uses the full kinematic transform into the co-moving and co-rotating frame of the anchor:
\begin{align*}
    x' &= (x - x_a)\cos\theta_a + (y - y_a)\sin\theta_a, & y' &= -(x - x_a)\sin\theta_a + (y - y_a)\cos\theta_a, \\
    v'_{x} &= v \cos(\theta - \theta_a) - v_a + \omega_a y', & v'_{y} &= v \sin(\theta - \theta_a) - \omega_a x', \\
    \theta' &= \operatorname{wrap}(\theta - \theta_a), & \omega' &= \omega - \omega_a.
\end{align*}
Here, the index $a$ corresponds to the anchor's parameters. Note that we need to split the velocity into two components $(v'_{x}, v'_{y})$ when applying the ORCA transform because the velocity no longer aligns with the heading angle of the object.

\paragraph{Dynamics and actions.} All moving objects are implemented as unicycles controlled by a continuous throttle $a_v$ and a steering action $a_\omega$ with $a_v, a_\omega\in[-1,1]$. Let $v$ be the speed in the direction of the object's heading, $\omega$ the angular velocity, $\theta$ the heading angle, and $x,y$ the position in the absolute coordinate frame. Then, on each environment step, these properties are updated as follows:
\begin{equation*}
    v \gets \operatorname{clip}(v + 0.05a_v, 0, 0.5), \qquad
    \omega \gets \tfrac{\pi}{8}a_\omega, \qquad
    \theta \gets \operatorname{wrap}(\theta + \omega), \qquad
    x \gets x + v \cos\theta, \qquad
    y \gets y + v \sin\theta
\end{equation*}
for each moving object, where $\operatorname{clip}(z, z_\text{min}, z_\text{max})$ limits $z$ to the interval $[z_\text{min}, z_\text{max}]$ and $\operatorname{wrap}$ maps an angle to $[-\pi,\pi)$. While the player's position is clipped at the walls, the target and distractors bounce off them: their positions are reflected back into the arena, and their heading angles are mirrored (and, with this, the implicit velocity vector). All objects preserve their speed and angular velocity in those collisions. Further, the target and distractor actions are selected via
\begin{align*}
    \epsilon_v & \sim \mathcal{N}(0, 0.5^2), & \epsilon_\omega & \sim \mathcal{N}(0, 0.5^2), \\
    u_v & \gets 0.9 u_v + \sqrt{1 - 0.9^2}\, \epsilon_v, & u_\omega & \gets 0.9 u_\omega + \sqrt{1 - 0.9^2}\, \epsilon_\omega, \\
    \varphi & \gets \operatorname{atan2}(y - y_\text{player}, x-x_\text{player}), & \tilde{\varphi} & \gets \operatorname{clip}(\operatorname{wrap}(\varphi - \theta) / \tfrac{\pi}{8}, -1, 1), \\
    a_v & \gets \operatorname{clip}(0.5u_v + 0.5, -1, 1), & a_\omega & \gets \operatorname{clip}(0.5u_\omega + 0.5\tilde{\varphi}, -1, 1)
\end{align*}
at each timestep, which corresponds to a smooth, random control that actively avoids the player. Whenever an object is (re)spawned, $u_v$ and $u_\omega$ are drawn from $\mathcal{N}(0, 0.5^2)$. Within a step, the player moves first, so $x_\text{player}$ and $y_\text{player}$ already refer to its new position. 

\paragraph{Experiments.} In our experiments, we train for $3 \cdot 10^6$ timesteps and use 10 parallel environments. \cref{tab:tag-results} lists the number of catches within 500 environment steps after training for different objects as anchors.

\clearpage
\section{Extended Results}
\label{app:ext}
 This section provides further results on the \game{Foosball} task, i.e., more evaluations of all possible match-ups (see \cref{tab:ext_foosball}), and the per-game results for the \game{Atari} plots (\cref{fig:aggregate,fig:drop}) in \cref{tab:ext_atari}. Except in \game{Kangaroo} and \game{Pong}, ORBiT consistently yields improvements over OCT. The lower average in \game{Kangaroo} is driven by a single seed (\cref{tab:kangaroo}), while \game{Pong} likely presents a greater challenge without absolute position information. However, further data is required to draw more definitive conclusions. \Cref{tab:tag-results} displays the influence of different numbers of objects and anchor choices in \game{Tag}.

\begin{table}[bh]
    \setlength{\tabcolsep}{5pt}  %
    \renewcommand{\arraystretch}{1.1}  %
    \centering
    \caption{\textbf{Extended Foosball results: Goals scored by every agent against every other agent.} ORBiT easily defeats the \ppomlp{} agents on all seed combinations, as well as the OCT without \relcoords{}. The values show the number of goals scored by the first agent (top) against the second agent (left) out of 2,000 total goals; e.g., 7 in (a) means \ppomlp{} scored 7 goals while ORBiT scored 1,993. Each agent plays 1,000 goals and then switches sides. The bottom row shows the sum of goals scored by the first agent ($\Sigma_s$), while the last column shows the number of goals conceded by the second agent ($\Sigma_c$).}
    \label{tab:ext_foosball}
    \begin{subtable}{0.47\textwidth}
        \centering
        \caption{\ppomlp{} vs. ORBiT.}
        \begin{tabular}{cccccccc}
            \toprule
            &          & \multicolumn{5}{c}{\textbf{\ppomlp{}}} &     \\
            & seed     & 0  & 1  & 2  & 3  & 4   & $\Sigma_c$     \\
            \midrule
            \multirow{5}{*}{\rotatebox{90}{\textbf{ORBiT}}}
            & 0        & 12 & 9  & 13 & 6  & 11  & 51           \\
            & 1        & 17 & 7  & 14 & 11 & 12  & 61           \\
            & 2        & 24 & 9  & 10 & 8  & 5   & 56           \\
            & 3        & 35 & 18 & 20 & 26 & 25  & 124          \\
            & 4        & 9  & 4  & 14 & 5  & 6   & 38           \\
            & $\Sigma_s$ & 97 & 47 & 71 & 56 & 59  & \textbf{330} \\
            \bottomrule
        \end{tabular}
        \label{subtab:experiment_1}
    \end{subtable}
    \hfill %
    \begin{subtable}{0.52\textwidth}
        \centering
        \caption{\ppomlp{} Agents.}
        \begin{tabular}{lccccccc}
            \toprule
            &            & \multicolumn{5}{c}{\textbf{\ppomlp{}}} &                \\
            & seed       & 0    & 1    & 2   & 3     & 4    & $\Sigma_c$     \\
            \midrule
            \multirow{5}{*}{\rotatebox{90}{\textbf{\ppomlp{}}}}
            & 0          & ---  & 514  & 458  & 1690 & 539  & 3201           \\
            & 1          & 1486 & ---  & 1574 & 1765 & 1139 & 5964           \\
            & 2          & 1542 & 426  & ---  & 1882 & 874  & 4724           \\
            & 3          & 310  & 235  & 118  & ---  & 357  & 1020           \\
            & 4          & 1461 & 861  & 1126 & 1643 & ---  & 5091           \\
            & $\Sigma_s$ & 4799 & 2036 & 3276 & 6980 & 2909 & \textbf{20000} \\
            \bottomrule
        \end{tabular}
    \end{subtable}
    \\\vfill\vspace{0.5cm}
    \begin{subtable}{0.47\textwidth}
        \centering
        \caption{Absolute Coordinates vs. \relcoords{}.}
        \begin{tabular}{cccccccc}
            \toprule
            &          & \multicolumn{5}{c}{\textbf{OCT}} &     \\
            & seed     & 0  & 1  & 2  & 3 & 4     & $\Sigma_c$    \\
            \midrule
            \multirow{5}{*}{\rotatebox{90}{\textbf{ORBiT}}}
            & 0        & 2  & 1  & 1 & 4  & 1     & 9           \\
            & 1        & 1  & 1  & 0 & 5  & 4     & 11          \\
            & 2        & 0  & 2  & 0 & 1  & 1     & 4           \\
            & 3        & 17 & 9  & 3 & 10 & 4     & 43          \\
            & 4        & 0  & 0  & 0 & 5  & 1     & 6           \\
            & $\Sigma_s$ & 20 & 13 & 4 & 25 & 11    & \textbf{73} \\
            \bottomrule
        \end{tabular}
        \label{subtab:experiment_2}
    \end{subtable}
    \hfill %
    \begin{subtable}{0.52\textwidth}
        \centering
        \caption{OCT Agents.}
        \begin{tabular}{cccccccc}
            \toprule
            &            & \multicolumn{5}{c}{\textbf{OCT}} &                   \\
            & seed       & 0    & 1    & 2    & 3    & 4    & $\Sigma_c$        \\
            \midrule
            \multirow{5}{*}{\rotatebox{90}{\textbf{OCT}}}
            & 0          & ---  & 1052 & 1129 & 1174 & 999  & 4354              \\
            & 1          & 948  & ---  & 897  & 1002 & 904  & 3751              \\
            & 2          & 871  & 1103 & ---  & 1162 & 958  & 4094              \\
            & 3          & 826  & 998  & 838  & ---  & 782  & 3444              \\
            & 4          & 1001 & 1096 & 1042 & 1218 & ---  & 4357              \\
            & $\Sigma_s$ & 3646 & 4249 & 3906 & 4556 & 3643 & \textbf{20000}    \\
            \bottomrule
        \end{tabular}
    \end{subtable}
    \\\vfill\vspace{0.5cm}
    \begin{subtable}{0.47\textwidth}
        \centering
        \caption{OCT vs. \ppomlp{}.}
        \begin{tabular}{cccccccc}
            \toprule
            &          & \multicolumn{5}{c}{\textbf{OCT}} &                \\
            & seed       & 0   & 1   & 2    & 3   & 4     & $\Sigma_c$     \\
            \midrule
            \multirow{5}{*}{\rotatebox{90}{\textbf{\ppomlp{}}}}
            & 0          & 13  & 190 & 158  & 177  & 34   & 572           \\
            & 1          & 419 & 193 & 297  & 200  & 285  & 1394          \\
            & 2          & 12  & 59  & 73   & 66   & 16   & 226           \\
            & 3          & 14  & 10  & 9    & 11   & 13   & 57            \\
            & 4          & 189 & 478 & 500  & 355  & 124  & 1646          \\
            & $\Sigma_s$ & 647 & 930 & 1037 & 809  & 472  & \textbf{3895} \\
            \bottomrule
        \end{tabular}
    \end{subtable}
    \hfill %
    \begin{subtable}{0.52\textwidth}
        \centering
        \caption{ORBiT Agents.}
        \begin{tabular}{cccccccc}
            \toprule
            &          & \multicolumn{5}{c}{\textbf{ORBiT}} &             \\
            & seed     & 0    & 1    & 2    & 3    & 4    & $\Sigma_c$       \\
            \midrule
            \multirow{5}{*}{\rotatebox{90}{\textbf{ORBiT}}}
            & 0        & ---  & 1228 & 1379 & 750  & 963  & 4320           \\
            & 1        & 772  & ---  & 828  & 748  & 826  & 3174          \\
            & 2        & 621  & 1172 & ---  & 606  & 692  & 3091           \\
            & 3        & 1250 & 1252 & 1394 & ---  & 1311 & 5207           \\
            & 4        & 1037 & 1174 & 1308 & 689  & ---  & 4208           \\
            & $\Sigma_s$ & 3680 & 4826 & 4909 & 2793 & 3792 & \textbf{20000} \\
            \bottomrule
        \end{tabular}
    \end{subtable}
    
\end{table}

{\clearpage
\setlength{\tabcolsep}{0.68mm}
\renewcommand{\arraystretch}{1.2}
\newcommand{\longtabcaption}{\textbf{Per-game results on the Atari environments}}
\setlength{\aboverulesep}{0pt}
\setlength{\belowrulesep}{0pt}
\setlength{\extrarowheight}{.75ex}
\begin{longtable}[c]{lccccc} %
	\caption{\longtabcaption{}; mean and standard deviation over 5 random seeds. Higher is better. Games marked with $^\ast$ do not have a \variant{translate} variant as the game screen content covers the entire frame and a lossless translation is not possible.}
	\label{tab:ext_atari}                                                                                                                                                      \\
	\toprule %
	\multirow{2}{*}{Game}                                    & \multicolumn{2}{c}{\relcoords{}}                     & \multicolumn{2}{c}{Absolute Coordinates} & Pixel         \\
                                                             \cmidrule(lr){2-3}                            \cmidrule(lr){4-5}        \cmidrule(lr){6-6}                        %
                                                             & ORBiT                 & PPObj                & OCT                  & PPObj                & \ppocnn{}          \\
	\midrule %
	\endfirsthead %
	\bottomrule %
	\endfoot %
	\caption{\longtabcaption{} (continued).}                                                                                                                                   \\
	\toprule %
	\multirow{2}{*}{Game}                                    & \multicolumn{2}{c}{\relcoords{}}                     & \multicolumn{2}{c}{Absolute Coordinates} & Pixel         \\
                                                             \cmidrule(lr){2-3}                            \cmidrule(lr){4-5}        \cmidrule(lr){6-6}                        %
                                                             & ORBiT                 & PPObj                & OCT                  & PPObj                & \ppocnn{}          \\
	\midrule %
	\endhead %
	\game{Alien}                                                    & $1511\pm727$          & $1658\pm530$         & $474\pm341$          & $748\pm605$          & $1644\pm649$       \\*
	\rowcolor{shade} \variant{translate}                     & $1686\pm782$          & $1613\pm545$         & $242\pm297$          & $584\pm323$          & $277\pm223$        \\*
	\rowcolor{shade} \variant{shuffle}                       & $1639\pm713$          & $310\pm186$          & $494\pm314$          & $228\pm89$           & ---                \\*
	\rowcolor{shade} \variant{last\_egg}                     & $1584\pm703$          & $1741\pm603$         & $100\pm303$          & $60\pm240$           & $1908\pm1275$      \\*
	\midrule %
	\game{Amidar}                                                   & $195\pm88$            & $347\pm78$           & $105\pm78$           & $255\pm124$          & $578\pm128$        \\*
	\rowcolor{shade} \variant{translate}                     & $182\pm83$            & $305\pm92$           & $113\pm79$           & $91\pm105$           & $22\pm28$          \\*
	\rowcolor{shade} \variant{shuffle}                       & $193\pm95$            & $15\pm19$            & $94\pm62$            & $6\pm13$             & ---                \\*
	\rowcolor{shade} \variant{pig\_enemies}                  & $189\pm85$            & $343\pm70$           & $185\pm154$          & $131\pm53$           & $591\pm123$        \\*
	\midrule %
	\game{Asterix}                                                  & $213\,542\pm125\,728$ & $1265\pm566$         & $10848\pm12061$      & $1226\pm612$         & $2800\pm503$       \\*
	\rowcolor{shade} \variant{translate}                     & $187\,259\pm127\,913$ & $1002\pm599$         & $3388\pm5786$        & $739\pm505$          & $256\pm364$        \\*
	\rowcolor{shade} \variant{shuffle}                       & $234\,747\pm129\,292$ & $197\pm170$          & $10487\pm8155$       & $237\pm134$          & ---                \\*
	\midrule %
	\game{Boxing}                                                   & $90\pm8$              & $91\pm7$             & $-11\pm25$           & $93\pm8$             & $92\pm6$           \\*
	\rowcolor{shade} \variant{translate}                     & $88\pm8$              & $92\pm7$             & $-30\pm30$           & $18\pm48$            & $11\pm34$          \\*
	\rowcolor{shade} \variant{shuffle}                       & $88\pm8$              & $32\pm35$            & $-11\pm26$           & $10\pm12$            & ---                \\*
	\rowcolor{shade} \variant{switch\_position}              & $92\pm7$              & $90\pm7$             & $-11\pm25$           & $93\pm7$             & $91\pm6$           \\*
	\midrule %
	\game{Breakout}                                                 & $330\pm99$            & $193\pm109$          & $323\pm108$          & $131\pm87$           & $205\pm105$        \\*
	\rowcolor{shade} \variant{translate}                     & $309\pm113$           & $183\pm118$          & $98\pm139$           & $95\pm78$            & $9\pm18$           \\*
	\rowcolor{shade} \variant{shuffle}                       & $314\pm111$           & $2\pm2$              & $313\pm99$           & $2\pm1$              & ---                \\*
	\rowcolor{shade} \variant{gravity}                       & $277\pm120$           & $213\pm117$          & $273\pm114$          & $155\pm98$           & $185\pm101$        \\*
	\midrule %
	\game{Freeway}$^\ast$                                 & $32\pm1$              & $32\pm1$             & $31\pm1$             & $31\pm1$             & $32\pm1$           \\*
	\rowcolor{shade} \variant{shuffle}                       & $32\pm1$              & $12\pm7$             & $31\pm1$             & $16\pm3$             & ---                \\*
	\rowcolor{shade} \variant{stop\_all\_cars\_tunnel}       & $32\pm1$              & $32\pm1$             & $8\pm16$             & $0\pm0$              & $32\pm1$           \\*
	\midrule %
	\game{Frostbite}$^\ast$                               & $261\pm14$            & $288\pm126$          & $258\pm14$           & $516\pm638$          & $285\pm19$         \\*
	\rowcolor{shade} \variant{shuffle}                       & $258\pm13$            & $51\pm51$            & $256\pm15$           & $66\pm38$            & ---         \\*
	\rowcolor{shade} \variant{reposition\_floes\_easy}       & $215\pm45$            & $168\pm46$           & $118\pm110$          & $32\pm44$            & $46\pm70$          \\*
	\midrule %
	\game{Gopher}                                                   & $7985\pm13564$        & $6808\pm9599$        & $2866\pm5012$        & $2611\pm3088$        & $1149\pm716$       \\*
	\rowcolor{shade} \variant{translate}                     & $8046\pm14037$        & $8614\pm11327$       & $2188\pm3111$        & $964\pm1186$         & $782\pm545$        \\*
	\rowcolor{shade} \variant{shuffle}                       & $13776\pm21489$       & $438\pm403$          & $2753\pm3784$        & $415\pm218$          & ---                \\
	\game{Kangaroo}$^\ast$                                & $3526\pm1433$         & $3756\pm4485$        & $4946\pm4064$        & $4344\pm3904$        & $6188\pm3895$      \\*
	\rowcolor{shade} \variant{shuffle}                       & $3506\pm1416$         & $8\pm40$             & $5006\pm4124$        & $45\pm115$           & ---                \\*
	\rowcolor{shade} \variant{randomize\_kangaroo\_position} & $3406\pm1390$         & $3182\pm3615$        & $4828\pm3961$        & $4318\pm3676$        & $6006\pm3929$      \\*
	\midrule %
	\game{MsPacman}                                                 & $2449\pm1067$         & $3643\pm1210$        & $217\pm307$          & $2399\pm1024$        & $3907\pm723$       \\*
	\rowcolor{shade} \variant{translate}                     & $2464\pm1100$         & $3475\pm1220$        & $117\pm76$           & $1856\pm1003$        & $474\pm669$        \\*
	\rowcolor{shade} \variant{shuffle}                       & $2143\pm947$          & $396\pm167$          & $226\pm328$          & $536\pm586$          & ---                \\*
	\rowcolor{shade} \variant{set\_level\_2}                 & $214\pm183$           & $241\pm145$          & $413\pm369$          & $128\pm40$           & $466\pm591$        \\*
	\midrule %
	\game{Pong}                                                     & $13\pm7$              & $18\pm2$             & $18\pm2$             & $18\pm2$             & $17\pm4$           \\*
	\rowcolor{shade} \variant{translate}                     & $14\pm6$              & $18\pm3$             & $-15\pm11$           & $-9\pm13$            & $-19\pm7$          \\*
	\rowcolor{shade} \variant{shuffle}                       & $14\pm6$              & $-21\pm1$            & $18\pm3$             & $-21\pm1$            & ---                \\*
	\rowcolor{shade} \variant{hidden\_enemy}                 & $14\pm6$              & $18\pm2$             & $-20\pm2$            & $-19\pm4$            & $17\pm3$           \\*
	\midrule %
	\game{SpaceInvaders}                                            & $513\pm123$           & $488\pm163$          & $424\pm210$          & $367\pm174$          & $735\pm193$        \\*
	\rowcolor{shade} \variant{translate}                     & $552\pm130$           & $478\pm132$          & $334\pm165$          & $355\pm180$          & $258\pm195$        \\*
	\rowcolor{shade} \variant{shuffle}                       & $516\pm114$           & $147\pm94$           & $400\pm173$          & $218\pm134$          & ---                \\*
	\rowcolor{shade} \variant{disable\_shield\_left}         & $530\pm133$           & $335\pm157$          & $388\pm176$          & $384\pm192$          & $622\pm190$        \\*
	\midrule %
	\game{Tennis}                                                   & $18\pm4$              & $22\pm1$             & $12\pm10$            & $8\pm13$             & $-6\pm9$           \\*
	\rowcolor{shade} \variant{translate}                     & $16\pm6$              & $22\pm2$             & $-8\pm11$            & $-5\pm10$            & $-8\pm10$          \\*
	\rowcolor{shade} \variant{shuffle}                       & $18\pm4$              & $-23\pm2$            & $13\pm11$            & $-16\pm10$           & ---                \\*
\end{longtable}}
\FloatBarrier

\begin{table}[ht]
    \centering
    \caption{\textbf{The higher average score of OCT in \game{Kangaroo} depends on a single seed.} The other seeds result in very similar performance, yielding a comparable IQM as well. IQM over all episodes.}
    \begin{tabular}{lccccccc}
        \toprule
         \multirow{2}{*}{Method} & \multicolumn{5}{c}{Seed} & \multirow{2}{*}{IQM}\\
         \cmidrule{2-6}
         &  0 &  1 &  2 &  3 &  4 \\
        \midrule
        OCT   & 3980   & 1800   & 12680  & 4470   & 1800 & \textbf{3435} \\
        ORBiT & 4700   & 1800   & 4620   & 4710   & 1800 & \textbf{3727} \\
        \bottomrule
        \end{tabular}
    \label{tab:kangaroo}
\end{table}

\clearpage

\begin{table}[t]
  \centering
  \caption{\textbf{Per-configuration results in \game{Tag}}: catches IQM  with 95\% CI over 5 seeds, each averaged over 100 episodes. Higher is better. Extended results for \cref{fig:tag}. PPObj cannot be evaluated beyond its $N_{\max}$ object slots, i.e., 1 Player, 1 Target, 2 Landmarks, and 5 Distractors. Landmarks are always spaced evenly along the box diagonal and in orientation, so changing their number moves all of them. The number of distractors and landmarks is changed only during testing, without retraining.}
  \label{tab:tag-results}
  \begin{tabular}{lcccc}
    \toprule
    & \multicolumn{2}{c}{\relcoords{}} & \multicolumn{2}{c}{Absolute Coordinates} \\
    \cmidrule(lr){2-3} \cmidrule(l){4-5}
    Configuration & ORBiT & PPObj & OCT & PPObj \\
    \midrule
    Base ($m = 5$, Anchor: Player)    & 22.4\,{\scriptsize[21.9, 22.9]} & 21.6\,{\scriptsize[18.1, 22.0]} & 1.8\,{\scriptsize[1.2, 2.9]} & 5.6\,{\scriptsize[1.6, 7.4]} \\
    \rowcolor{shade}$m' = 0$ distractors              & 21.9\,{\scriptsize[21.1, 22.4]} & --- & 2.4\,{\scriptsize[1.0, 4.8]} & --- \\
    \rowcolor{shade}$m' = 1$ distractors              & 22.1\,{\scriptsize[21.7, 22.5]} & --- & 2.1\,{\scriptsize[1.0, 4.0]} & --- \\
    \rowcolor{shade}$m' = 10$ distractors             & 22.5\,{\scriptsize[21.8, 23.3]} & --- & 1.5\,{\scriptsize[1.1, 2.4]} & --- \\
    \rowcolor{shade}$k' = 0$ landmarks                      & 21.9\,{\scriptsize[20.7, 22.4]} & --- & 1.1\,{\scriptsize[0.8, 1.6]} & --- \\
    \rowcolor{shade}$k' = 4$ landmarks                       & 22.2\,{\scriptsize[21.8, 22.8]} & --- & 1.7\,{\scriptsize[1.1, 2.2]} & --- \\
    \rowcolor{shade}Anchor: Target                    & 20.7\,{\scriptsize[16.7, 21.6]} & 10.3\,{\scriptsize[0.9, 14.7]} & --- & --- \\
    \rowcolor{shade}Anchor: Distractor                & 0.7\,{\scriptsize[0.4, 0.8]} & 1.0\,{\scriptsize[0.5, 1.1]} & --- & --- \\
    \rowcolor{shade}Anchor: Landmark                  & 1.4\,{\scriptsize[0.7, 1.8]} & 5.4\,{\scriptsize[3.8, 12.5]} & --- & --- \\
    \bottomrule
  \end{tabular}
\end{table}

\section{Input Perturbations}
\label{app:perturbations}
In addition to testing in training environments, we investigate the generalization capabilities of \relcoords{} and ORBiT by evaluating the trained agents in perturbed environments. For that, we use a RAD-like perturbation, i.e., random \variant{translation}, random \variant{shuffling} the object set, and the \framework{HackAtari} framework. Examples are shown in \cref{fig:perturbations}, and full per-game results are in \cref{tab:ext_atari}.

\begin{figure}
    \centering
    \begin{subfigure}[t]{0.32\linewidth}
        \centering
        \includegraphics[height=0.66\linewidth]{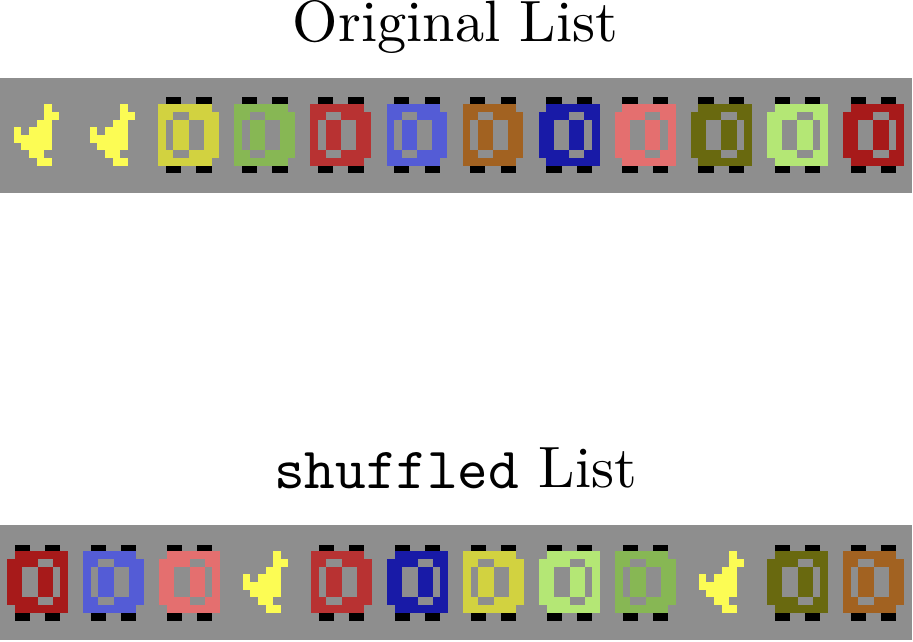}
        \caption{\variant{Shuffled} Object List}
    \end{subfigure}%
    \hfill
    \begin{subfigure}[t]{0.32\linewidth}
        \centering
        \includegraphics[height=0.66\linewidth]{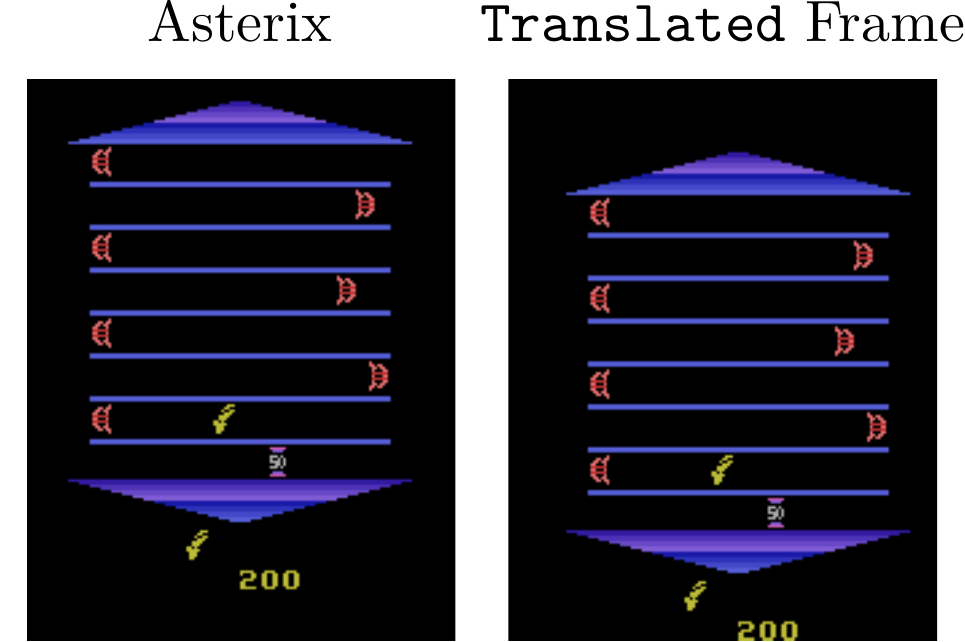}
        \caption{Random \variant{Translation}}
    \end{subfigure}%
    \hfill
    \begin{subfigure}[t]{0.32\linewidth}
        \centering
        \includegraphics[height=0.66\linewidth]{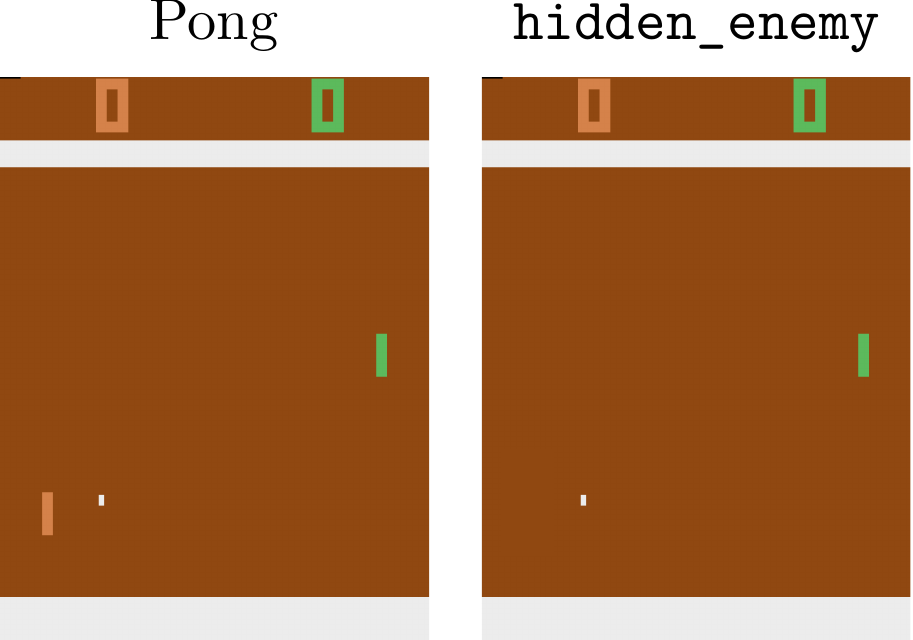}
        \caption{\framework{HackAtari}}
    \end{subfigure}%
    \caption{\textbf{The perturbed environment variants to test the generalizability of ORBiT and \relcoords{}.} In the \variant{shuffled} object list variant (a), the order of $\mathcal{O}_t$ (or $\tilde{\mathcal{O}}_t$) is randomized before it is passed to the agent. The random \variant{translation} (b) shifts the actual game content within the video frame without cutting any content. The \framework{HackAtari} framework (c) features a variety of variants of the Atari games from level changes to logic changes, e.g., hiding the opponent in \game{Pong}.}
    \label{fig:perturbations}
\end{figure}

\clearpage
\end{document}